%% file: main.tex
\documentclass[a4paper]{article}
\usepackage{amsmath}
\usepackage{amssymb}
\usepackage{mathtools}
\usepackage{authblk}
\usepackage{hyperref,url,dsfont}
\usepackage{bookmark}
\usepackage{natbib}
\usepackage{fullpage}

\input{preamble}

\title{Non-Parametric Rehearsal Learning via Conditional Mean Embeddings}

\author[1,2]{Wen-Bo Du}
\author[1,2]{Tian-Zuo Wang}
\author[1,2]{Han-Jia Ye}
\author[1,2]{Zhi-Hua Zhou}
\affil[1]{National Key Laboratory for Novel Software Technology, Nanjing University, China}
\affil[2]{School of Artificial Intelligence, Nanjing University, China}
\date{}

\begin{document}

\maketitle

\input{sections/abstract}
\input{sections/introduction}
\input{sections/setup_identification}
\input{sections/method_objective}

\input{sections/estimator_optimization}
\input{sections/theory}
\input{sections/experiments}
\input{sections/limitations_conclusion}

\bibliographystyle{plainnat}
\bibliography{references}

\appendix
\input{appendix/appendix_main}

\end{document}

%% file: preamble.tex
\usepackage{microtype}
\usepackage{xcolor}
\usepackage{graphicx}
\usepackage{makecell}
\usepackage{subcaption}
\usepackage{booktabs}
\usepackage{wrapfig}
\usepackage{enumitem}
\usepackage{amsmath}
\usepackage{amssymb}
\usepackage{mathtools}
\usepackage{amsthm}
\usepackage{bm}
\usepackage{algorithm}
\usepackage{algorithmic}

\usepackage{hyperref}
\usepackage{url}
\usepackage[capitalize,noabbrev]{cleveref}

\hypersetup{
    colorlinks=true,
    citecolor=blue!80!black,
    linkcolor=red!80!black,
    urlcolor=blue!80!black,
    filecolor=blue!80!black
}

\theoremstyle{plain}
\newtheorem{theorem}{Theorem}
\newtheorem{proposition}{Proposition}
\newtheorem{lemma}{Lemma}

\newtheorem*{prop1}{Proposition 1}
\newtheorem*{theorem1}{Theorem 1}
\newtheorem*{theorem2}{Theorem 2}

\theoremstyle{definition}
\newtheorem{definition}{Definition}
\newtheorem{assumption}{Assumption}

\theoremstyle{remark}

\newif\ifshowcomments
\showcommentsfalse

\makeatletter
\newcommand{\safe@def}[2]{%
    \@ifundefined{#1}{\expandafter\def\csname #1\endcsname{\ensuremath{\mathbf{#1}}}}{}%
    \@ifundefined{#2}{\expandafter\def\csname #2\endcsname{\ensuremath{\mathbf{#2}}}}{}%
    \@ifundefined{s#2}{\expandafter\def\csname s#2\endcsname{\ensuremath{\mathcal{#2}}}}{}%
    \@ifundefined{bb#2}{\expandafter\def\csname bb#2\endcsname{\ensuremath{\mathbb{#2}}}}{}%
}

\safe@def{a}{A} \safe@def{b}{B} \safe@def{c}{C} \safe@def{d}{D}
\safe@def{e}{E} \safe@def{f}{F} \safe@def{g}{G} \safe@def{h}{H}
\safe@def{i}{I} \safe@def{j}{J} \safe@def{k}{K} \safe@def{l}{L}
\safe@def{m}{M} \safe@def{n}{N} \safe@def{o}{O} \safe@def{p}{P}
\safe@def{q}{Q} \safe@def{r}{R} \safe@def{s}{S} \safe@def{t}{T}
\safe@def{u}{U} \safe@def{v}{V} \safe@def{w}{W} \safe@def{x}{X}
\safe@def{y}{Y} \safe@def{z}{Z}
\makeatother

\renewcommand{\a}{\ensuremath{\mathbf{a}}}
\renewcommand{\u}{\ensuremath{\mathbf{u}}}
\renewcommand{\h}{\ensuremath{\mathbf{h}}}
\renewcommand{\T}{\mathsf{T}}

\newcommand{\gray}{\color{gray}}

\def \eg {\textit{e.g.}}
\def \ie {\textit{i.e.}}

\def \wrt {w.r.t.}
\def \Fig {Fig.}
\def \Eq {Eq.}
\def \Sec {Sec.}
\def \PA {\mathbf{PA}}
\newcommand{\aleq}{\mathop{=}\limits^{a}}

%% file: sections/abstract.tex
\begin{abstract}
In machine learning, a critical class of decision-related problems concerns preventing predicted undesirable outcomes, referred to as the \textit{avoiding undesired future} (AUF) problem.
To address this, the \textit{rehearsal learning} framework has been proposed to model influence relations for effective decisions.
However, existing rehearsal methods rely on restrictive parametric assumptions such as linear systems or additive noise, limiting their practical applicability. 
In this paper, we propose the first non-parametric rehearsal learning approach for AUF without assuming specific functional forms of data generation processes.
Specifically, we use kernel machinery to reformulate the AUF objective into a unified representation that disentangles desirability modeling from action-induced distributional changes. 
To handle the discontinuity of desirability indicator, we present a smooth Probit surrogate and provide an approximation error bound. Meanwhile, we capture the action-induced changes via conditional mean embeddings, and develop a kernel ridge regression based nested estimator for AUF objective with consistency guarantees.
Such a formulation naturally accommodates nonlinear systems and non-additive noise, and empirical results on synthetic and real-data-derived semi-synthetic benchmarks demonstrate the effectiveness and flexibility of our approach.
\end{abstract}

%% file: sections/introduction.tex
\section{Introduction}
  \label{sec:intro}
Machine learning (ML) have achieved great success in various real-world prediction tasks~\citep{lecun2015deep}. 
Instead of solely focusing on prediction, \citet*{zhouzh2022rehearsal} emphasizes another important issue, \ie, if an ML model predicts that something undesired is going to occur, how to find effective actions to prevent it from happening. 
This problem is known as \emph{avoiding undesired future} (AUF) problem.
Consider a motivating example of bank lending illustrated in Fig.~\ref{fig:auf_example}, where the bank seeks to proactively optimize its loan approval strategy.
The objective is to maximize the probability that the target outcomes $\mathbf{Y}$ (\eg, repayment rate and return on investment (ROI)) satisfy specific expectations, formalized as falling into a desired region $\mathcal{S}$ (\eg, $\{ \mathbf{y}\, |\,\text{repayment rate}\geq \alpha, \text{ROI} \geq \beta \}$).
In practice, the bank typically observes user profiles $\mathbf{X}$ (\eg, credit score and debt-to-income ratio) to predict \(\Y\).
If the model anticipates a high likelihood of an undesired future (\ie, $\mathbf{Y} \notin \mathcal{S}$), it becomes imperative to actively alter actionable intermediate variables $\mathbf{Z}_A$ (\eg, interest rate) to steer \(\Y\) towards $\mathcal{S}$.

A main challenge in such scenario lies in the fact that historical records are collected observationally, where the values of $\mathbf{Z}_A$ could be confounded by unactionable factors (\eg, market fluctuations) that affect both $\mathbf{Z}_A$ and $\mathbf{Y}$.
Consequently, the data distribution under such active alteration differs significantly from the conditional distribution on the historical data.
Furthermore, in high-stakes scenarios such as financial lending and healthcare, actively exploring alterations via trial-and-error is prohibitive due to the risk of financial loss in lending and ethical concerns in medical decision-making.
Thus, finding optimal decision alterations to maximize the probability of \(\Y\in\sS\) (\ie, the AUF probability) solely based on historical data becomes the core objective of the AUF problem.

\begin{figure*}
  \centering
      \includegraphics[width=0.98\linewidth]{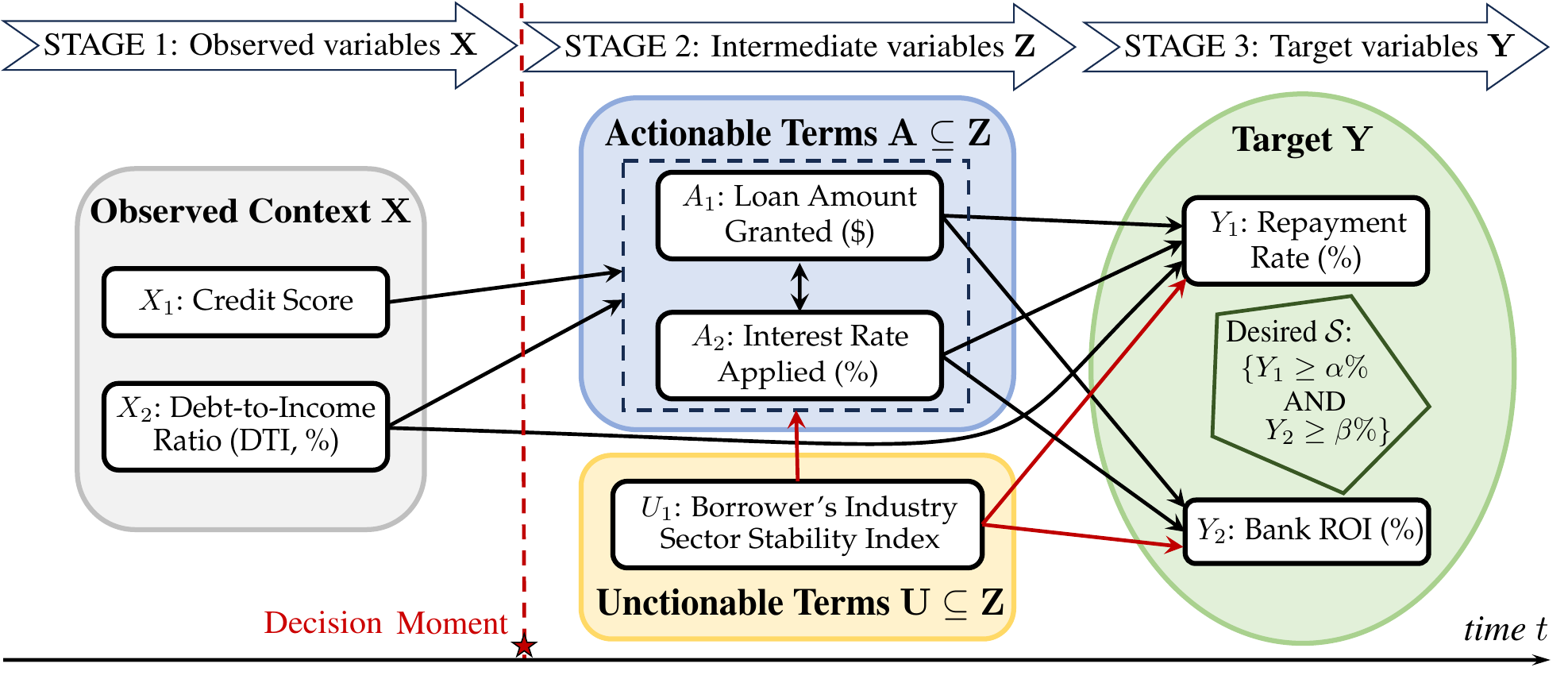}
      \caption{An example illustrating the bank lending AUF scenario.} % 左图的标题
  \label{fig:auf_example} % 总图的引用标签
  \vskip -0.1in
\end{figure*}

To address this, one solution is by \emph{rehearsal learning} framework~\citep{zhouzh2022rehearsal}, which leverages fine-grained structural information, including both direct and bi-directed influence, to guide the selection of decision alterations. While existing rehearsal-based methods have made progress, particularly under assumptions of linear additive noise models~\citep{qint2023srm,du2025CARE}, such assumptions rarely hold in complex real-world systems, posing practical challenges. 
Fundamentally, these methods are parametric (and predominantly assumes linear Gaussian) that heavily relies on the alignment between the true data generation mechanism and the hypothesized parametric forms. Once such alignment is not achievable, the resulting decision strategy suffers from severe performance degradation due to model misspecification.
While \citet{qin2025gradient} attempts to accommodate non-linear scenarios, it remains parametric, and resorts to a heuristic objective which can deviate substantially from the true AUF probability.
Consequently, this approach could not guarantee that the suggested decision aligns with the true AUF goal, often yielding suboptimal empirical results.
% Consequently, this approach lacks the theoretical guarantee that its optimization target aligns with the actual goal of AUF, often yielding suboptimal results.

Considering the limitations, in this paper, we propose the first non-parametric rehearsal learning framework that addresses AUF without relying on specific functional form assumptions.
Our key idea is to use kernel machinery to make the AUF objective both \emph{estimable} from observational data and \emph{optimizable} over potential actions.
Specifically, we leverage conditional mean embeddings (CMEs) to reformulate the original AUF probability maximization as maximizing an inner product between a desirability function and an action-dependent embedding of target variables $\mathbf{Y}$ within a reproducing kernel hilbert space (RKHS). 
To handle the discontinuity of the desirability indicator $\mathbb{I}(\mathbf{Y}\in\mathcal{S})$ that does not reside in standard continuous kernel spaces, we introduce a smooth surrogate $w_\eta$ with an approximation error bound to the indicator.
Although surrogate smoothing and action-dependent evaluation are separate challenges, they can be \emph{unified} under the RKHS representation when the action-dependent component is captured via CMEs of identifiable conditional distributions from observational data.
Building on this formulation, we establish identifiability of the smoothed AUF objective and develop a kernel ridge regression (KRR) based nested estimator using historical observational samples with established statistical consistency.
Finally, we propose a practical multi-start projected gradient ascent procedure to optimize the estimator and obtain AUF decisions.
For empirical evaluation, we further extract a real-data-derived semi-synthetic benchmark from NHANES 2011--2018~\citep{nhanes_cdc}, a diabetes-related statistical dataset, on which our method improves AUF probability from $0.402$ under no-action baseline to $0.596$. 
% In Tab.~\ref{tab:main_results}, the largest gap between our method and the no-action baseline is $77.6$ percentage points on Lin-Syn1 ($0.942$ vs.\ $0.166$), while the largest gap among the nonlinear benchmarks is $39.8$ percentage points on Non-Syn2 ($0.584$ vs.\ $0.186$).
Our contributions can be summarized as:
\begin{enumerate}[leftmargin=12pt, itemsep=-0.0in]
    \item We propose the first non-parametric rehearsal learning approach for AUF problem.
    By representing action-dependent target distributions with conditional mean embeddings, the proposed method avoids restrictive parametric model assumptions and is better suited to complex nonlinear systems.
    \item We present a Probit-smoothed KRR-based nested estimator for the AUF objective and establish identifiability of the smoothed objective. We further establish finite-sample error bounds and consistency guarantees, with a projected-gradient algorithm for optimizing candidate actions.
    \item We empirically evaluate the proposed method across both synthetic and real-world-derived settings.
    Besides linear/nonlinear synthetic benchmarks and the standard Bermuda benchmark, we establish and introduce a semi-synthetic NHANES diabetes benchmark for the AUF problem, where our method shows strong performance against various baselines.
\end{enumerate}

%% file: sections/setup_identification.tex
\section{Background}  
  In this section, we briefly review several key concepts.
%    and definitions that are essential for our approach.
  \subsection{Influence structure and feasible alterations}
  \label{sec:order_alter}
  We assume variables generating in a specific partial order $\mathcal{O}$ that records which variables can occur before others in the data-generating process: if $V_i \prec_{\mathcal{O}} V_j$, then $V_i$ precedes $V_j$ in every topological ordering compatible with $\mathcal{O}$.
  In this paper, the order is treated as given side information, which can be learned from observational data~\citep{tao2026order}.
  The variables are partitioned into context variables $\X$, intermediate variables $\Z$, and outcome variables $\Y$, with block-level ordering \(\X \prec_{\mathcal{O}} \Z \prec_{\mathcal{O}} \Y\).

  The context $\X$ is observed before decision-making, while $\Y$ is observed after the decision and is desired to fall in a target region $\sS$.
  The intermediate variables $\Z$ may contain both variables that can be acted upon and variables that cannot.
  % More specifically, $\Z$ will be decomposed into pre-alteration non-actionable variables $\U$, actionable variables $\A$, and post-alteration non-actionable variables $\U^{\text{post}}$ in Def.~\ref{def:Z_decompose}.
  The underlying data-generating process may be compatible with richer structured equation models, but our main results require only the known order.

  In structural causal models~\citep{spirtes2000causation}, $do(V_i=v_i)$ denotes a structural operation that cuts the dependence of $V_i$ on its original parents and replaces its structural assignment with the constant value $v_i$.
  For the AUF problem, we further need to distinguish which variables can actually be acted upon in a given real-world decision problem.
  We call a variable $V_i$ actionable if its feasible alteration set $\Delta_{V_i}$ is non-empty and the decision-maker can take an action to set $V_i$ to some value $v_i \in \Delta_{V_i}$.
  For such an actionable variable, we write $V_i \aleq v_i$ to indicate a feasible alteration.
  For a vector of actionable variables $\A$, we write $\A \aleq \a$, or equivalently use the shorthand $\mathring{\a}$, to make explicit that the operation is applied to actionable nodes.
  The notation $P(\cdot \mid \x, \mathring{\a})$ denotes the post-alteration distribution after observing context $\x$ and applying this feasible alteration.

  \subsection{Conditional mean embeddings}
  Let $U$ and $V$ be random variables in measurable domains $\sU$ and $\sV$, endowed with kernels $k_u: \sU \times \sU \to \bbR$ and $k_v: \sV \times \sV \to \bbR$, along with their associated RKHSs, denoted by $\sH_{k_u}$ and $\sH_{k_v}$.
  We assume the kernels are continuous, bounded, and positive-definite. 
  The conditional mean embedding (CME)~\citep{song2009hilbert, song2013kernel} maps the conditional distribution $P(V | u)$ to an element $\mu_{V|u} \in \sH_{k_v}$. 
  Formally, it is defined as the expected feature map of $V$ conditioned on $u$:
  \begin{equation*}
      \mu_{V|u} := \bbE_{V \sim P(\cdot \mid u)} [k_v(v, \cdot)] = \int_{\sV} k_v(v, \cdot) \, \mathrm{d}P(v \mid u).
  \end{equation*}
  A crucial property of CME is that it enables the evaluation of conditional expectations for any function $g \in \sH_{k_v}$ via the inner product in $\sH_{k_v}$ that \(\bbE[g(V) \mid u] = \langle g, \mu_{V|u} \rangle_{\sH_{k_v}}\) (reproducing property), bridging the integral operation of expectation with the algebraic operation in the Hilbert space.
  
  % Empirically, given a dataset $\sD = \{(u_i, v_i)\}_{i=1}^N$, the CME is approximated by a weighted sum of observed feature maps:
  % \begin{equation}
  %     \hat{\mu}_{V|u} := \sum_{i=1}^N \beta_i(u) \phi_v(v_i).
  % \end{equation}
  % The weights $\beta_i(u)$ capture the relevance of the $i$-th sample to the query $u$ and are derived via regularized inversion in $\sH_{k_u}$ (details in Section~\ref{sec:your_krr_section}).

%% file: sections/method_objective.tex
\section{Our approach}
\label{sec:method}

In this section, we present our proposed approach. 
We first formulate the AUF objective and introduce a smooth surrogate to handle its discontinuous. 
Subsequently, we propose a kernel ridge regression (KRR) based nested estimator to learn this objective from observational training data and develop a gradient-based optimization algorithm.
Finally, we provide theoretical analysis establishing approximation guarantees of the smooth surrogate and the convergence rate of the proposed estimator.

\subsection{Formulation of the AUF problem}
\label{sec:problem_formulation}

Given historical data $\mathcal{D} = \{(\mathbf{x}_i, \mathbf{z}_i, \mathbf{y}_i)\}_{i=1}^N$ (collected from observational distribution, distinct from the alterational one), we focus on the decision moment where the context $\mathbf{x}$ has just been observed, while the subsequent intermediate variables $\mathbf{Z}$ and outcomes $\mathbf{Y}$ are yet to materialize.
Our objective is to determine the optimal values for the actionable subset $\mathbf{A} \subseteq \mathbf{Z}$ that maximize the probability of $\mathbf{Y}$ falling into a target desired region $\mathcal{S}$ under the feasible alteration $\mathring{\mathbf{a}}$:
\begin{equation}
\label{eq:auf_opt}
\begin{aligned}
    \arg\max_{\mathbf{a}} \quad & \mathbb{P}(\mathbf{Y} \in \mathcal{S} \mid \X=\mathbf{x}, \A\aleq\mathbf{a}) \\
    \text{s.t.} \quad & \mathbf{a}_{\text{left}}\, \preccurlyeq\, \mathbf{a}\, \preccurlyeq\, \mathbf{a}_{\text{right}},
\end{aligned}
\end{equation}
where $\preccurlyeq$ denotes element-wise inequality, and $\mathbf{a}_{\text{left}}, \mathbf{a}_{\text{right}}$ are constant vectors specifying the feasible alteration range.
Notably, unlike prior works that rely on parametric structural equations governed by $\bm{\theta}$, our formulation is non-parametric and does not assume specific functional forms for the generating process.
The desired region $\mathcal{S}$ is specified as a convex polytope defined by $l$ linear constraints:
\begin{equation}
    \label{eq:region}
    \sS = \left\{ \y \in \bbR^{d_y} \mid \m_k^\T\y \leq b_k, \, k=1, \dots, l \right\}.
\end{equation}

\subsection{Smoothed desirability function with identifiability}
\label{sec:smooth_identifiability}

To facilitate the optimization of Eq.~\eqref{eq:auf_opt}, we first reformulate the AUF probability as the expectation of a desirability indicator variable. Let $\mathbb{I}(\mathbf{y} \in \mathcal{S})$ be the indicator function which takes the value 1 if $\mathbf{y} \in \mathcal{S}$ and 0 otherwise. The objective function is equivalently expressed as:
\begin{equation*}
    \mathbb{P}(\mathbf{Y} \in \mathcal{S} \mid \mathbf{x}, \mathring{\mathbf{a}}) = \mathbb{E}_{\mathbf{y} \sim P(\cdot \mid \mathbf{x}, \mathring{\mathbf{a}})} [\mathbb{I}(\mathbf{y} \in \mathcal{S})].
\end{equation*}
Directly maximizing this expectation presents two fundamental challenges regarding optimization and estimation:
(i) The indicator function $\mathbb{I}(\cdot)$ is discontinuous and has zero gradients almost everywhere. 
From a parametric perspective, calculating gradients \wrt $\mathbf{a}$ is intractable due to the lack of gradient signals from the objective itself. From a non-parametric kernel perspective, strictly discontinuous functions do not reside in the RKHS associated with common continuous kernels (\eg, Gaussian RBF), rendering standard kernel methods inapplicable for direct estimation.
% This distribution generally differs from the observational conditional $P(\Y \mid \mathbf{x}, \mathbf{a})$, thus cannot be directly estimated from the historical data $\mathcal{D}$ due to the presence of the variables influencing both $\A$ and $\Y$.
(ii) The objective depends on the alterational distribution 
$P(\Y \mid \mathbf{x}, \mathring{\mathbf{a}})$, which describes the outcome distribution after applying a feasible alteration. 
This distribution generally differs from the observational conditional 
$P(\Y \mid \mathbf{x}, \mathbf{a})$ and cannot be directly estimated from historical data $\mathcal{D}$ due to the presence of the variables influencing both $\A$ and $\Y$.

To address the first challenge, we introduce a smooth surrogate to approximate the hard indicator. Leveraging the definition of $\mathcal{S}$ in Eq.~\eqref{eq:region}, we define the smoothed desirability function $w_\eta(\mathbf{y})$ as:
\begin{equation}
    \label{eq:probit}
    w_\eta(\mathbf{y}) \triangleq \prod_{k=1}^{l} \Phi \left(\eta(b_k - \mathbf{m}_k^\top \mathbf{y})\right),
\end{equation}
where $\Phi(\cdot)$ is the cumulative distribution function (CDF) of the standard normal distribution, and $\eta > 0$ is a scaling parameter controlling the sharpness of the approximation. 
% and \(\{b_k, \m_k\}\)s are defined in Eq.~\eqref{eq:region}.

\setlength{\intextsep}{0pt}
\begin{wrapfigure}[15]{r}{0.58\textwidth}
    \centering
    \includegraphics[width=0.97\linewidth]{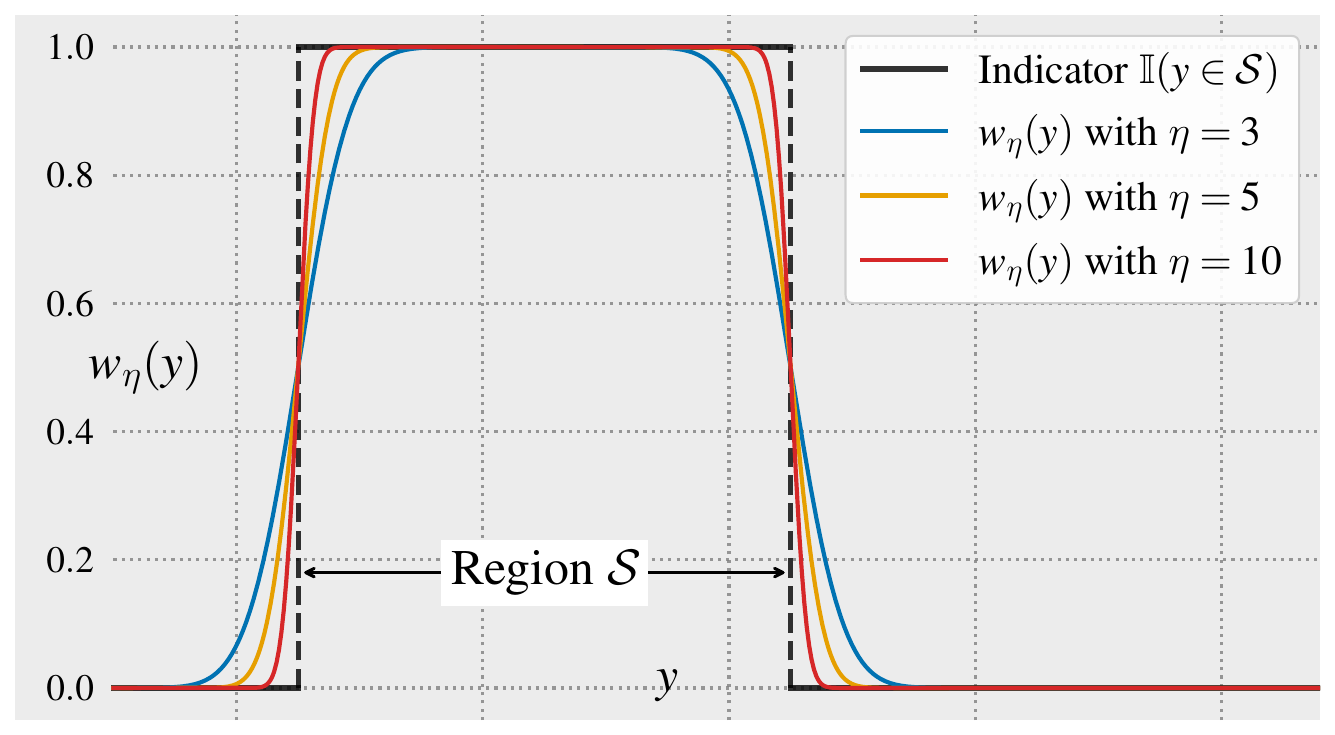}
    \caption{An example of smooth surrogate $w_\eta(\cdot)$ approximating hard indicator $\mathbb{I}(\cdot \in \mathcal{S})$ with varying scaling $\eta$.}
    \label{fig:smooth_surrogate_comparison}
\end{wrapfigure}

As illustrated in Fig.~\ref{fig:smooth_surrogate_comparison}, $w_\eta(\mathbf{y})$ converges to $\mathbb{I}(\mathbf{y} \in \mathcal{S})$ as $\eta$ increases. 
While the visualization depicts a univariate case, this approximation applies equally to the high-dimensional convex polytopes defined in Eq.~\eqref{eq:region}.
Beyond merely enabling differentiability, this surrogate offers distinct advantages.
First, $w_\eta(\cdot)$ replaces the discontinuous constraints with a function that is $C^\infty$-continuous, ensuring numerical stability for gradient-based solvers. 
Second, unlike the indicator function which yields zero gradients for any $\mathbf{y} \notin \mathcal{S}$, $w_\eta(\mathbf{y})$ provides non-zero gradients even in the infeasible region. 
This property is particularly valuable when positive samples ($\mathbf{y} \in \mathcal{S}$) are scarce, in which case the surrogate provides an informative signal distinguishing a ``near-miss'' from a significant failure, thereby effectively guiding the optimization toward the desired region.

To address the second challenge, we establish the identifiability of the AUF objective under the smooth surrogate. 
We first formalize the structure of the intermediate variables \(\Z\).

\begin{definition}[Decomposition of $\Z$]
    \label{def:Z_decompose}
    Given the known partial order $\mathcal{O}$, we use a sufficient disjoint partition $\Z = \U \cup \A \cup \U^{\text{post}}$ of the intermediate variables, where (i) the subset $\A \subseteq \Z$ contains variables that can be directly altered by the decision-maker; (ii) the subset $\U$ contains the non-actionable pre-alteration variables in this sufficient decomposition; and (iii) the subset $\U^{\text{post}} := \Z \setminus (\U \cup \A)$ contains the remaining non-actionable intermediate variables that occur after the alteration; these variables may lie downstream of $\A$ and are not adjusted for.
\end{definition}

% Based on this decomposition, the interventional expectation can be expressed using only observational conditional distributions.

\begin{proposition}[Identifiability of the AUF Objective]
    \label{prop:identifiable}
    Let $\Z = \U \cup \A \cup \U^{\text{post}}$ be the decomposition defined in Def.~\ref{def:Z_decompose}. The AUF objective defined by the smooth desirability approximation $w_\eta(\cdot)$ under the feasible alteration $\mathring{\a}$ is identifiable from the observational distribution as:
    \begin{equation*}
      \bbE_{\y\sim P\left(\cdot \mid \x, \mathring{\a}\right)}\!\!\left[w_\eta(\Y)\right]
      =
      \bbE_{\u\sim P(\cdot \mid \x)}\!\!\left[\bbE_{\y\sim P(\cdot\mid \x, \u, \a)}\!\!\left[ w_\eta(\Y) \right]\right].
    \end{equation*}
\end{proposition}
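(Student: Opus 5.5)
The plan is to treat the identity as a back-door adjustment. The role of the "sufficient" decomposition in Def.~\ref{def:Z_decompose} is exactly to supply an adjustment set: $(\X,\U)$ precedes $\A$ in $\mathcal{O}$ and blocks every confounding path between $\A$ and $\Y$. I will make this explicit as the working content of sufficiency, in the form of two conditions.

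\emph{(a) Non-descendant invariance.} $\U$ is not downstream of $\A$, so $P(\u\mid\x,\mathring{\a})=P(\u\mid\x)$.

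\emph{(b) Conditional ignorability / modularity.} Given $(\x,\u)$, the post-alteration law of $\Y$ coincides with the observational conditional at $\A=\a$, that is, $P(\y\mid\x,\u,\mathring{\a})=P(\y\mid\x,\u,\a)$.

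Under a structural causal model compatible with $\mathcal{O}$, (a) follows because the assignments of $\X$ and $\U$ are untouched by replacing the assignment of $\A$. Condition (b) follows from the back-door criterion, or from rule 2 of do-calculus, since $(\X,\U)$ blocks all back-door paths from $\A$ to $\Y$. The variables $\U^{\text{post}}$ may be descendants of $\A$, so they are deliberately left out of the conditioning set and are marginalized inside the inner law.

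The key steps, in order, are as follows.

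\textbf{Step 1.} Observe that $w_\eta$ is measurable and takes values in $[0,1]$, since it is a product of normal CDF values. All expectations are therefore finite, and Fubini/Tonelli can be applied freely.

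\textbf{Step 2.} Apply the tower property under the post-alteration distribution, conditioning on $\U$:
\[
\bbE_{\y\sim P(\cdot\mid\x,\mathring{\a})}[w_\eta(\Y)]=\int\!\!\int w_\eta(\y)\,\mathrm{d}P(\y\mid\x,\u,\mathring{\a})\,\mathrm{d}P(\u\mid\x,\mathring{\a}).
\]

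\textbf{Step 3.} Replace the outer measure using (a) and the inner measure using (b). This yields the right-hand side of the proposition. Since the inner conditional now integrates out $\U^{\text{post}}$ under the observational law at $\A=\a$, it equals $\bbE_{\y\sim P(\cdot\mid\x,\u,\a)}[w_\eta(\Y)]$.

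\textbf{Step 4.} Note that the argument never uses the form of $w_\eta$ beyond boundedness. The same identity therefore holds for the hard indicator and for any bounded measurable desirability function.

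I expect the main obstacle to be Step 3(b), specifically pinning down what "sufficient" means so that (b) is a consequence and not merely a restatement. My first approach is to define sufficiency as the back-door condition relative to $\mathcal{O}$: no element of $\X\cup\U$ is a descendant of $\A$, and $\X\cup\U$ blocks every path into $\A$ that reaches $\Y$. From there I would derive (b) via the truncated factorization of the post-alteration joint. Writing $P(\x,\u,\mathring{\a},\u^{\text{post}},\y)$ as the observational factorization with the $\A$-factor removed, and dividing by $P(\x,\u)$, the factors for $\U^{\text{post}}$ and $\Y$ evaluated at $\A=\a$ reproduce $P(\u^{\text{post}},\y\mid\x,\u,\a)$.

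A secondary technical point is positivity. For $P(\cdot\mid\x,\u,\a)$ to be well defined for $P(\cdot\mid\x)$-almost every $\u$, we need $P(\a\mid\x,\u)>0$, or a positive density, on the feasible box $[\a_{\text{left}},\a_{\text{right}}]$. I would state this as a standing overlap assumption, and choose regular conditional versions so that the equalities hold almost everywhere.
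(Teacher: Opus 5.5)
Your proposal is correct and follows the same route as the paper. The paper invokes the adjustment formula $p(\y\mid\x,\mathring{\a})=\int p(\y\mid\x,\u,\a)\,p(\u\mid\x)\,d\u$, with $\U^{\text{post}}$ marginalized inside the inner conditional, then integrates against $w_\eta$ and swaps the order of integration; your conditions (a)–(b), the back-door reading of ``sufficient,'' and the boundedness and positivity remarks just make explicit what the paper leaves implicit in Def.~\ref{def:Z_decompose}.
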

\vskip -0.05in

Prop.~\ref{prop:identifiable} transforms the alterational objective into a nested expectation over observational distributions. 
To streamline the notation for the subsequent estimation, we formally define the smoothed AUF objective as \(J_\eta(\a; \x) \triangleq \bbE_{\y\sim P(\cdot \mid \x, \mathring{\a})}[w_\eta(\Y)]\) and let $\h \triangleq (\mathbf{x}, \mathbf{u}, \mathbf{a})$ denote the concatenated vector comprising the observed context, pre-alteration covariates, and the action variables. 
Accordingly, we define a kernel function $k_h(\cdot, \cdot)$ over the domain of $\h$.

%% file: sections/estimator_optimization.tex
\subsection{Surrogate AUF objective via nested KRR}

Based on the identifiability result in Prop.~\ref{prop:identifiable}, the AUF objective can be structured as an outer conditional expectation (over pre-alteration covariates $\mathbf{u}$) of an inner conditional expectation (over outcome $\mathbf{y}$). 
Hence, the evaluation naturally decomposes into a two-stage estimation problem. 
Let $\psi(\mathbf{h}) \triangleq \mathbb{E}_{\mathbf{y}\sim P(\cdot\mid \mathbf{h})}[ w_\eta(\mathbf{Y}) ]$ denote the inner expectation given $\mathbf{h} = (\mathbf{x}, \mathbf{u}, \mathbf{a})$. 
With observed context $\mathbf{x}$ and specific alteration $\mathbf{a}$, $\psi(\mathbf{h})$ serves as the response variable dependent on $\mathbf{u}$, acting as the target for the outer expectation $\mathbb{E}_{\mathbf{u}\sim P(\cdot \mid \mathbf{x})}$.
This structural dependency naturally motivates our nested approach: we first estimate $\psi(\mathbf{h})$ in outcome RKHS $\mathcal{H}_{k_y}$, and subsequently use these estimates to solve the outer expectation in context RKHS $\mathcal{H}_{k_x}$.

\textbf{Stage 1: Inner Expectation Estimation.} 
We begin by deriving the estimator for the inner component $\psi(\mathbf{h})$. 
While the original desirability indicator $\mathbb{I}(\y\in\mathcal{S})$ is discontinuous and generally does not reside in the RKHS associated with continuous kernels, the proposed smooth surrogate $w_\eta$ possesses sufficient regularity. 
By assuming that $w_\eta$ resides in the RKHS associated with kernel $k_y$, \ie, $w_\eta \in \mathcal{H}_{k_y}$, we can leverage the \textit{reproducing property} $w_\eta(\y) = \langle w_\eta, k_y(\cdot, \y) \rangle_{\mathcal{H}_{k_y}}$ to reformulate the inner expectation. 
Let $\mu_{\Y \mid \h} \in \mathcal{H}_{k_y}$ denote the CME of the distribution $P(\y \mid \h)$, then we have:
\vspace{-0.05in}
\begin{equation}
    \label{eq:inner_cme_result}
    \begin{aligned}
        \mathbb{E}_{\y\sim P(\cdot\mid \h)}\!\left[ w_\eta(\Y) \right] 
        &= \int_{\mathcal{Y}} \langle w_\eta, k_y(\cdot, \y) \rangle_{\mathcal{H}_{k_y}} \, p(\y\, |\, \h) \, d\y \\
        &= \left\langle w_\eta, \int_{\mathcal{Y}} k_y(\cdot, \y) \, p(\y\, |\, \h) \, d\y \right\rangle_{\mathcal{H}_{k_y}} = \langle w_\eta, \mu_{\Y \mid \h} \rangle_{\mathcal{H}_{k_y}}.
    \end{aligned}
\end{equation}
Consequently, the evaluation of the expectation hinges on the accurate estimation of CME $\mu_{\Y\mid \h}$.
To address this, we employ KRR, a non-parametric framework that effectively captures complex non-linear dependencies in RKHS while admitting a computationally efficient closed-form solution~\citep{GrunewalderLGBPP12}.
Given observational dataset $\mathcal{D} = \{(\h_i, \mathbf{y}_i)\}_{i=1}^N$, where $\h_i$ collects the historical covariates and $\mathbf{y}_i$ denotes the outcome, the CME $\mu_{\Y\mid \h}$ can be empirically estimated as:
\vspace{-0.05in}
\begin{equation}
    \begin{aligned}
        &\hat{\mu}_{\Y\mid \h} = \sum_{i=1}^N \beta_i(\h) k_y(\cdot, \mathbf{y}_i),  \quad \text{with} \quad \boldsymbol{\beta}(\h) = (\K_{hh} + N \lambda_{h} \I)^{-1} \mathbf{k}_{h}(\h),
    \end{aligned}
    \label{eq:cme_estimator}
\end{equation}
where $\mathbf{k}_{h}(\h) = [k_h(\h_1, \h), \dots, k_h(\h_N, \h)]^\T$, $\lambda_{h} > 0$ is the regularization parameter in KRR, and $\K_{hh} \in \mathbb{R}^{N \times N}$ is the Gram matrix with entries $[\K_{hh}]_{ij} = k_h(\h_i, \h_j)$.
Substituting the estimated CME from Eq.~\eqref{eq:cme_estimator} back into the inner product formulation in Eq.~\eqref{eq:inner_cme_result} and let $\mathbf{w}_{\eta} = [w_\eta(\mathbf{y}_1), \dots, w_\eta(\mathbf{y}_N)]^\T$, it can be derived that:
\vspace{-0.03in}
% By leveraging the linearity of the inner product and applying the reproducing property once again, the calculation simplifies to a weighted sum of the desirability values of historical samples:
\begin{equation}
    \begin{aligned}
        \hat{\mathbb{E}}_{\mathbf{y}\sim P(\cdot\mid \mathbf{h})}\!&\left[ w_\eta(\mathbf{Y}) \right]  = \left\langle w_\eta, \sum_{i=1}^N \beta_i(\mathbf{h}) k_y(\cdot, \mathbf{y}_i) \right\rangle_{\mathcal{H}_{k_y}} \\
        &= \sum_{i=1}^N \beta_i(\mathbf{h}) \langle w_\eta, k_y(\cdot, \mathbf{y}_i) \rangle_{\mathcal{H}_{k_y}} = \mathbf{w}_{\eta}^\T (\mathbf{K}_{hh} + N \lambda_{h} \mathbf{I})^{-1} \mathbf{k}_{h}(\mathbf{h}).
    \end{aligned}
    \label{eq:inner_expectation_scalar}
\end{equation}
\textbf{Stage 2: Outer Expectation Estimation.} 
We now proceed to the outer expectation over the pre-alteration covariates $\mathbf{u}$.
Note that the term $\bm{\alpha}^\T \triangleq \mathbf{w}_{\eta}^\T (\mathbf{K}_{hh} + N \lambda_{h} \mathbf{I})^{-1}$ in Eq.~\eqref{eq:inner_expectation_scalar} is a constant row vector determined solely by the observational training data. 
To facilitate the analytical integration over $\mathbf{u}$, we introduce Ass.~\ref{ass:decompose}, a mild structural condition on the kernel that is satisfied by a lot of standard kernels, including the Gaussian RBF and polynomial kernels.
\begin{assumption}[Decomposable Kernels]
    \label{ass:decompose}
    We assume that the kernel $k_h(\cdot, \cdot)$ defined on the joint space $\mathbf{h} = (\mathbf{x}, \mathbf{u}, \mathbf{a})$ admits a multiplicative decomposition: 
    \[k_h(\mathbf{h}_i, \mathbf{h}) = k_x(\mathbf{x}_i, \mathbf{x}) \cdot k_u(\mathbf{u}_i, \mathbf{u}) \cdot k_a(\mathbf{a}_i, \mathbf{a}).\]
\end{assumption}
\vspace{-0.05in}
With this assumption, we substitute the estimator $\hat{\psi}(\mathbf{h})$ back into the outer expectation. 
Since the integration is performed solely \wrt $\mathbf{u}$, the terms of $\mathbf{x}$ and $\mathbf{a}$ remain constant within the integral, thus \(\mathbb{E}_{\mathbf{u}\sim P(\cdot \mid \mathbf{x})}[\hat{\psi}(\mathbf{x}, \mathbf{u}, \mathbf{a})]\) can be computed as:
\begin{equation}
    \begin{aligned}
        & \int_{\sU} \left( \sum_{i=1}^N \alpha_i k_x(\mathbf{x}_i, \mathbf{x}) k_u(\mathbf{u}_i, \mathbf{u}) k_a(\mathbf{a}_i, \mathbf{a}) \right) p(\mathbf{u} \mid \mathbf{x}) \, d\mathbf{u} \\
        =& \sum_{i=1}^N \alpha_i \underbrace{k_x(\mathbf{x}_i, \mathbf{x})}_{\text{Context Match}} \underbrace{k_a(\mathbf{a}_i, \mathbf{a})}_{\text{Action similarity}} \underbrace{\int_{\sU} k_u(\mathbf{u}_i, \mathbf{u}) p(\mathbf{u} \mid \mathbf{x}) \, d\mathbf{u}}_{\text{Expected Unactionable Terms}}.
    \end{aligned}
    \label{eq:outer_decomposition}
\end{equation}
% The integral term in Eq.~\eqref{eq:outer_decomposition} corresponds to the expectation of the feature map $k_u(\mathbf{u}_i, \cdot)$ under the conditional distribution $p(\mathbf{u}\, \mid\, \mathbf{x})$. 
Similar to Eq.~\eqref{eq:cme_estimator}, the second KRR estimator of CME $\mu_{\U \mid \mathbf{x}}$ can be given as \(\hat{\mu}_{\U \mid \mathbf{x}} = \sum_{j=1}^N \gamma_j(\mathbf{x}) k_u(\cdot, \mathbf{u}_j)\), with \(\bm{\gamma}(\mathbf{x}) = (\mathbf{K}_{xx} + N \lambda_{x} \mathbf{I})^{-1} \mathbf{k}_{x}(\mathbf{x})\).
Leveraging this estimator, we can explicitly expand \(\int_{\sU} k_u(\mathbf{u}_i, \mathbf{u}) p(\mathbf{u} \mid \mathbf{x}) \, d\mathbf{u}\) in \Eq~\eqref{eq:outer_decomposition} as:
\begin{equation}
    \begin{aligned}
        \int_{\sU} k_u(\mathbf{u}_i, \mathbf{u})  p(\mathbf{u} \mid \mathbf{x}) \, & d\mathbf{u}
        = \left\langle k_u(\mathbf{u}_i, \cdot), \hat{\mu}_{\U \mid \mathbf{x}} \right\rangle_{\mathcal{H}_{k_u}} \\
        =& \left\langle k_u(\mathbf{u}_i, \cdot), \sum_{j=1}^N \gamma_j(\mathbf{x}) k_u(\cdot, \mathbf{u}_j) \right\rangle_{\mathcal{H}_{k_u}} \\
        = &\sum_{j=1}^N \gamma_j(\mathbf{x}) \underbrace{\langle k_u(\mathbf{u}_i, \cdot), k_u(\mathbf{u}_j, \cdot) \rangle_{\mathcal{H}_{k_u}}}_{k_u(\mathbf{u}_i, \mathbf{u}_j)}.
        % &= [\mathbf{K}_{uu} \bm{\gamma}(\mathbf{x})]_i,
    \end{aligned}
    \label{eq:confounder_bridge_derivation}
\end{equation}
Let $\mathbf{K}_{uu} \in \mathbb{R}^{N \times N}$ denote the Gram matrix with entries $[\mathbf{K}_{uu}]_{ij} = k_u(\mathbf{u}_i, \mathbf{u}_j)$, \Eq~\eqref{eq:confounder_bridge_derivation} equals the \(i\)-th row of vector \(\mathbf{K}_{uu} \bm{\gamma}(\mathbf{x})\).
Substituting Eq.~\eqref{eq:confounder_bridge_derivation} back into Eq.~\eqref{eq:outer_decomposition} yields the closed-form nested estimator of original AUF objective:
\begin{equation*}
    \label{eq:final_scalar_objective}
    \begin{aligned}
        \hat{\bbE}_{\mathbf{u}\sim P(\cdot \mid \mathbf{x})}[\hat{\mathbb{E}}_{\mathbf{y}\sim P(\cdot\mid \mathbf{h})}\!\left[ w_\eta(\mathbf{Y}) \right]  ]
    = &\sum_{i=1}^N \alpha_i [\mathbf{K}_{uu} \bm{\gamma}(\mathbf{x})]_i k_x(\mathbf{x}_i, \mathbf{x}) k_a(\mathbf{a}_i, \mathbf{a})\\
    = & \left(\bm{\alpha}\odot \mathbf{k}_x(\mathbf{x}) \odot (\mathbf{K}_{uu} \bm{\gamma}(\mathbf{x}))\right)^\T \mathbf{k}_a(\a),
    \end{aligned}
\end{equation*}
where $\odot$ denotes Hadamard product.
This is to say, let \(\bm{\omega}(\mathbf{x})=\left(\bm{\alpha}\odot \mathbf{k}_x(\mathbf{x}) \odot (\mathbf{K}_{uu} \bm{\gamma}(\mathbf{x}))\right)\), the final surrogate of AUF objective simplifies to a linear function of the alteration's feature map in RKHS:
\begin{equation}
    \hat{J}_\eta(\mathbf{a}; \mathbf{x}) = \bm{\omega}(\mathbf{x})^\T \mathbf{k}_a(\mathbf{a}).
    \label{eq:final_vector_objective}
\end{equation}
This formulation enables optimizing the proposed alteration $\mathbf{a}$ by maximizing its weighted alignment with the observed realizations of actionable variables in the training set.
The weight vector $\bm{\omega}(\mathbf{x})$ integrates the desirability of historical outcomes (via $\bm{\alpha}$) with the relevance of the current context (via $\mathbf{k}_x{\x}$).
Meanwhile, the term $\mathbf{K}_{uu}$ functions as a confounder adjustment bridge: it measures the similarity between expected confounder profile (via $\bm{\gamma}(\x)$) and the actual pre-alteration covariates observed in historical samples.
This mechanism effectively re-weights the observational data to mitigate the distributional shift induced by actions, ensuring that the optimization targets the influence of the decision alteration $\mathbf{a}$ on the target variables $\mathbf{Y}$ given the context $\mathbf{X}$.
% This mechanism effectively re-weights the observational data to mitigate the distributional shift of latent variables, ensuring that the optimization of $\mathbf{a}$ targets the true structural influence rather than spurious correlations found in the training distribution.
Note that in the degenerate case where $\mathcal{U} = \emptyset$, adjustment term $\mathbf{K}_{uu} \bm{\gamma}(\mathbf{x})$ simplifies to an all-one vector, reflecting that no confounder adjustment is required.

\subsection{Gradient-based rehearsal learning}
\label{sec:optimization}

Having derived the closed-form surrogate $\hat{J}_\eta(\mathbf{a}; \mathbf{x})$, our goal is to find the optimal alteration $\mathbf{a}^*$ that maximizes this objective subject to the feasible constraints:
\begin{equation*}
    \mathbf{a}^* = \arg\max_{\mathbf{a}} \hat{J}_\eta(\mathbf{a}; \mathbf{x}) \quad \text{s.t.} \quad \mathbf{a}_{\text{left}} \preccurlyeq \mathbf{a} \preccurlyeq \mathbf{a}_{\text{right}}.
\end{equation*}
To enable efficient gradient-based optimization and ensure consistency across the framework, we specify all kernels involved in our approach (\ie, $k_x, k_u, k_a, k_y$) as Radial Basis Function (RBF) kernels.
For a generic variable $\mathbf{v} \in \{\mathbf{x}, \mathbf{u}, \mathbf{a}, \mathbf{y}\}$, the corresponding kernel is defined as:
\begin{equation*}
    k_v(\mathbf{v}, \mathbf{v}') = \exp\left( - \frac{\| \mathbf{v} - \mathbf{v}' \|_2^2}{2\sigma_v^2} \right),
\end{equation*}
where $\sigma_v$ denotes the bandwidth parameter specific to each variable type.
This specification naturally satisfies the decomposable assumption in Ass.~\ref{ass:decompose} and provides the necessary smoothness for the subsequent gradient derivation. 
Under this specification, the objective function becomes a weighted sum of Gaussian bumps centered at historical actions $\mathbf{a}_i$:
\begin{equation*}
    \hat{J}_\eta(\mathbf{a}; \mathbf{x}) = \sum_{i=1}^N \omega_i(\mathbf{x}) \exp\left( - \frac{\| \mathbf{a}_i - \mathbf{a} \|_2^2}{2\sigma_a^2} \right).
\end{equation*}
Recall that the weight $\omega_i(\mathbf{x}) = [\bm{\alpha}\odot \mathbf{k}_x(\mathbf{x}) \odot (\mathbf{K}_{uu} \bm{\gamma}(\mathbf{x}))]_i$ encapsulates the confounding-adjusted desirability of the $i$-th historical sample given the current context $\mathbf{x}$.
Crucially, unlike coefficients in standard density estimation, $\omega_i(\mathbf{x})$ can take negative values as it stems from the analytical KRR solutions involving matrix inversions.
% Consequently, the resulting objective landscape is non-convex: positive weights induce an ``attraction'' towards beneficial historical actions, while negative weights generate a ``repulsion'' force away from samples that are either undesirable or structurally biased.

Since the surrogate objective is differentiable, we can derive the gradient with respect to $\mathbf{a}$ analytically. By applying the chain rule, the gradient is given by:
\begin{equation}
    \label{eq:gradient}
    \begin{aligned}
        \nabla_{\mathbf{a}} \hat{J}_\eta(\mathbf{a}; \mathbf{x}) &= \sum_{i=1}^N \omega_i(\mathbf{x}) \nabla_{\mathbf{a}} k_a(\mathbf{a}_i, \mathbf{a}) = \frac{1}{\sigma_a^2} \sum_{i=1}^N \left[ \omega_i(\mathbf{x}) k_a(\mathbf{a}_i, \mathbf{a}) \right] (\mathbf{a}_i - \mathbf{a}).
    \end{aligned}
\end{equation}

\begin{algorithm}[tb]
    \caption{Rehearsal Learning via Nested Estimator}
    \label{alg:nested_krr}
 \begin{algorithmic}[1]
    \STATE {\bfseries Input:} Observational data $\mathcal{D}$, context $\mathbf{x}$, region $\mathcal{S}$.
    \STATE {\bfseries Hyperparameters:} Kernel params $\{\sigma_h, \sigma_x, \sigma_u, \sigma_a\}$, regularization $\{\lambda_h, \lambda_x\}$, smoothing $\eta$, learning rate $\nu$.
    
    \STATE \textit{\gray{// Phase 1: Offline Learning (One-time)}}
    \STATE Compute Gram matrices $\mathbf{K}_{hh}, \mathbf{K}_{xx}, \mathbf{K}_{uu}$.
    \STATE Compute smoothed desirability vector $\mathbf{w}_{\eta}$ using Eq.~\eqref{eq:probit}.
    \STATE Solve stage 1 coefs: $\bm{\alpha}^\T \leftarrow \mathbf{w}_{\eta}^\T (\mathbf{K}_{hh} + N \lambda_{h} \mathbf{I})^{-1}$.
    \STATE Solve stage 2 coefs: $\mathbf{G} \leftarrow (\mathbf{K}_{xx} + N \lambda_{x} \mathbf{I})^{-1}$.
 
    \STATE \textit{\gray{// Phase 2: Context-specific Alteration (Per context $\mathbf{x}$)}}
    \STATE Compute kernel vectors $\mathbf{k}_x(\mathbf{x})$.
    \STATE  $\bm{\gamma}(\mathbf{x}) \leftarrow \mathbf{G} \mathbf{k}_x(\mathbf{x})$.
    \STATE  $\bm{\omega}(\mathbf{x}) \leftarrow \bm{\alpha} \odot \mathbf{k}_x(\mathbf{x}) \odot (\mathbf{K}_{uu} \bm{\gamma}(\mathbf{x}))$.
    
    \STATE \textit{\gray{// Phase 3: Multi-start Optimization}}
    \STATE $\mathcal{A}_{\text{init}} \leftarrow \text{Top-}K \text{ indices } i \text{ where } \omega_i(\mathbf{x}) > 0$.
    \STATE $\mathcal{S}_{\text{sols}} \leftarrow \emptyset$
    \FOR{each $i \in \mathcal{A}_{\text{init}}$}
        \STATE Initialize $\mathbf{a}^{(0)} \leftarrow \mathbf{a}_i$.
        \FOR{$t = 1$ to $T$}
            \STATE Compute gradient $\nabla_{\mathbf{a}} \hat{J}_\eta(\mathbf{a}^{(t-1)}; \mathbf{x})$ via Eq.~\eqref{eq:gradient}.
            \STATE Update: $\mathbf{a}' \leftarrow \mathbf{a}^{(t-1)} + \nu \nabla_{\mathbf{a}} \hat{J}_\eta$.
            \STATE Project: $\mathbf{a}^{(t)} \leftarrow \mathcal{P}_{\{\mathbf{a}|\mathbf{a}_{\text{left}} \preccurlyeq \mathbf{a} \preccurlyeq \mathbf{a}_{\text{right}}\}}(\mathbf{a}')$. 
            % \COMMENT{$\Omega= \{\mathbf{a}_{\text{left}} \preccurlyeq \mathbf{a} \preccurlyeq \mathbf{a}_{\text{right}}\}$}
        \ENDFOR
        \STATE Store solution: $\mathcal{S}_{\text{sols}} \leftarrow \mathcal{S}_{\text{sols}} \cup \{ \mathbf{a}^{(T)} \}$.
    \ENDFOR
    \STATE {\bfseries Output:} $\mathbf{a}^* = \arg\max_{\mathbf{a} \in \mathcal{S}_{\text{sols}}} \hat{J}_\eta(\mathbf{a}; \mathbf{x})$.
 \end{algorithmic}
 \end{algorithm}
 This gradient interpretation is intuitive: it acts as a force vector that pulls the candidate $\mathbf{a}$ towards historical actions $\mathbf{a}_i$ that have high desirability weights $\omega_i(\mathbf{x})$, while being repelled by those with negative weights.
 Meanwhile, objective $\hat{J}_\eta(\mathbf{a}; \mathbf{x})$ presents a potentially non-convex landscape since structured as a sum of Gaussians with mixed-sign weights $\omega_i(\mathbf{x})$, thus random initialization strategy risks entrapment in poor local optima.
 To maximize the estimated objective, we employ a multi-start strategy initialized at historical actions that exhibit high estimated utility.
 Specifically, we construct the candidate set $\mathcal{A}_{\text{init}}$ by selecting historical $\mathbf{a}_i$ whose corresponding weight values $\omega_i(\mathbf{x})$ are among the top-$K$ (up to total number of) positive entries in $\bm{\omega}(\mathbf{x})$:
\begin{equation*}
    \mathcal{A}_{\text{init}} = \{ \mathbf{a}_i \in \mathcal{D} \mid \omega_i(\mathbf{x}) > 0 \text{ and is among top-}K \}.
\end{equation*}
Alg.~\ref{alg:nested_krr} outlines the full decision-making procedure.
It starts with an offline stage where kernel Gram matrices and corresponding KRR coefficients are computed from historical data. 
Given a newly observed context $\mathbf{x}$, the algorithm then computes the context-dependent mixing weights $\bm{\omega}(\mathbf{x})$, which are subsequently used to initialize a multi-start projected gradient ascent procedure that iteratively performs gradient ascent and projects the updates onto the feasible set, eventually yielding the optimal alteration $\mathbf{a}^*$. 
The overall time complexity is \(\sO(N^3)\), dominated by the matrix inversion required in the offline stage (note that it is distinct from the online inference time).

%% file: sections/theory.tex
\subsection{Theoretical analysis}
\label{sec:theory}

% In this subsection, we provide a rigorous theoretical analysis of the proposed framework, covering both the approximation guarantees of the smooth surrogate and the convergence properties of the nested estimator.

We first establish the approximation bound for the smooth desirability function as follows.

\begin{theorem}
    \label{thm:surr}
    Let \(\sS\) be the convex polytope region defined in Eq.~\eqref{eq:region}, and let \(w_\eta(\y)\) be the Probit surrogate of the indicator function \(\bbI(\y\in \sS)\) as defined in Eq.~\eqref{eq:probit}. Assuming the conditional density is bounded, the gap between the true AUF probability and the surrogate objective satisfies:
    \[
    \Big| \bbP(\Y\in\sS\mid \x, \mathring{\a}) - \bbE_{\y\sim P\left(\cdot \mid \x, \mathring{\a}\right)}\left[w_\eta(\y)\right]\Big| = \sO\left( \frac{\sqrt{\ln \eta}}{\eta} \right).
\]
\end{theorem}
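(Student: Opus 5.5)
Write $p(\y)$ for the post-alteration density of $\Y$ under $P(\cdot\mid\x,\mathring{\a})$, bounded by $M$ as assumed. Let $d_k(\y)=(b_k-\m_k^\top\y)/\|\m_k\|$ be the signed distance to the $k$-th facet hyperplane; without loss of generality take $\|\m_k\|=1$ by rescaling $(\m_k,b_k)$, which only changes constants. The gap is
\[
\Delta=\int \bigl(\bbI(\y\in\sS)-w_\eta(\y)\bigr)p(\y)\,d\y ,
\]
and I bound $|\Delta|$ by splitting $\bbR^{d_y}$ into a thin band around $\partial$-hyperplanes and its complement. Fix a threshold $\delta=\delta(\eta)$ and let $B_\delta=\bigcup_k\{\y:|d_k(\y)|\le\delta\}$. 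The plan is to show that the integrand is tiny off $B_\delta$, that $B_\delta$ has small probability, and then to balance the two by choosing $\delta$.

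\textbf{Step 1 (off the band).} Off $B_\delta$ there are two cases. If $\y\in\sS$, then every $d_k(\y)>\delta$, so by the product structure and $1-\prod_k t_k\le\sum_k(1-t_k)$ we get
\[
0\le 1-w_\eta(\y)\le \sum_k \Phi(-\eta\delta)\le l\,\Phi(-\eta\delta).
\]
If $\y\notin\sS$, some $d_k(\y)<-\delta$, so $w_\eta(\y)\le\Phi(-\eta\delta)$. With the Gaussian tail bound $\Phi(-t)\le e^{-t^2/2}$, the contribution off the band is at most $l\,e^{-\eta^2\delta^2/2}$, since $p$ integrates to one.

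\textbf{Step 2 (on the band).} There the integrand lies in $[-1,1]$, so the contribution is at most $P(\Y\in B_\delta)\le\sum_k P(|d_k(\Y)|\le\delta)$. The boundedness assumption must be read as boundedness of the density of each one-dimensional projection $\m_k^\top\Y$; this is automatic, for example, when $\Y$ has compact support and a bounded density. Under that reading, $P(|d_k(\Y)|\le\delta)\le 2M\delta$, so the band contributes at most $2lM\delta$. This is the step I expect to be the main obstacle: $\mathbb{R}^{d_y}$ slabs have infinite Lebesgue measure, so the hypothesis must be made precise as I described, or the slab restricted to a bounded region containing the mass, to get a linear-in-$\delta$ bound.

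\textbf{Step 3 (balance).} Combining the two steps,
\[
|\Delta|\le 2lM\delta+l\,e^{-\eta^2\delta^2/2}.
\]
Choose $\delta=\sqrt{2\ln\eta}/\eta$. The exponential term then equals $l/\eta$ and the band term equals $2lM\sqrt{2\ln\eta}/\eta$, so $|\Delta|=\sO(\sqrt{\ln\eta}/\eta)$ for $\eta\ge e$, as claimed. The constant depends only on $l$, $M$ and the normalization of the $\m_k$. The choice of $\delta$ is essentially optimal for this two-term trade-off, which explains the logarithmic factor.
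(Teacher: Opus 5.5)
Your decomposition differs from the paper's, and your Step 1 is sharper than the paper's. Your Step 2, however, does not prove the theorem under its stated hypothesis, and the repair does not require the stronger hypothesis you propose.

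The paper splits space with the Euclidean $\epsilon$-tube around $\partial\mathcal{S}$. It bounds the tube's mass by $M$ times its volume, which is $O(\epsilon)$ via the surface area of $\mathcal{S}$, so $\mathcal{S}$ is tacitly bounded. It also uses the Mills-ratio bound $\Phi(-t)\le e^{-t^2/2}/(\sqrt{2\pi}\,t)$. You instead use slabs around the facet hyperplanes and the cruder $\Phi(-t)\le e^{-t^2/2}$, which only affects the lower-order term. Your choice makes Step 1 exact: off $B_\delta$, $\mathbf{y}\in\mathcal{S}$ forces every $d_k>\delta$, and $\mathbf{y}\notin\mathcal{S}$ forces some $d_k<-\delta$. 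The paper's ``deep outside'' case instead asserts that a point at distance at least $\epsilon$ from $\mathcal{S}$ violates some constraint by at least $\epsilon$. That is false near an acute vertex: beyond a vertex of half-angle $\theta$, the largest violation can be only $\epsilon\sin\theta$. It holds only after shrinking $\epsilon$ by a polytope-dependent, Hoffman-type constant.

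The price is Step 2. A bounded joint density does not control full slabs, because mass can pile up along a facet hyperplane far from $\mathcal{S}$. For example, take $\mathcal{S}=[0,1]^2$ and density $\mathbb{I}(0<y_1<1,\ 2<y_2<2+\tfrac12 y_1^{-1/2})$. Then $\mathbb{P}(|Y_1|\le\delta)=\sqrt{\delta}$, although the true gap is at most $\Phi(-\eta)$. For bounded $\mathcal{S}$ you can keep the hypothesis and replace $B_\delta$ by $B_\delta\cap\mathcal{S}_\delta$, where $\mathcal{S}_\delta=\{\mathbf{y}:\mathbf{m}_k^\top\mathbf{y}\le b_k+\delta\ \forall k\}$:
\begin{itemize}
\item Points outside $\mathcal{S}_\delta$ have some $d_k<-\delta$, so your Step 1 outside case covers them verbatim.
\item For $\delta\le 1$, $\mathcal{S}_\delta\subseteq\mathcal{S}_1$ is bounded, since it has the same recession cone as $\mathcal{S}$.
\item Hence each truncated slab has volume $O(\delta)$, and the band has probability at most $CM\delta$.
\end{itemize}

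Conversely, your projected-density reading is exactly the right hypothesis when $\mathcal{S}$ is unbounded. Eq.~\eqref{eq:region} permits this, and the bank and NHANES regions are unbounded. There the paper's tube has infinite volume, and the claimed rate genuinely fails under a bounded joint density alone. For $\mathcal{S}=\{y_1\le 0\}$ and density $\mathbb{I}(0<y_1<1,\ 0<y_2<\tfrac12 y_1^{-1/2})$, the gap is of order $\eta^{-1/2}$. Your argument goes through unchanged in that setting.

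Finally, the $\sqrt{\ln\eta}$ factor comes from the hard split in both proofs and is not intrinsic. With $g(\mathbf{y})=\max_k(\mathbf{m}_k^\top\mathbf{y}-b_k)$, one has $|\mathbb{I}(\mathbf{y}\in\mathcal{S})-w_\eta(\mathbf{y})|\le l\,\Phi(-\eta|g(\mathbf{y})|)$. Integrating this against $\mathbb{P}(|g(\mathbf{Y})|\le t)=O(t)$, which holds under either hypothesis, gives $O(1/\eta)$.
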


Thm.~\ref{thm:surr} provides a theoretical justification for utilizing the Probit surrogate. 
The established bound guarantees that the discrepancy is tightly controlled by the scaling parameter $\eta$, vanishing at a rate of $\tilde{\sO}(1/\eta)$ as $\eta \to \infty$. 
While a larger $\eta$ yields a tighter approximation of the indicator function, it is worth noting that $\eta$ cannot be arbitrarily large in practice. 
As $\eta \to \infty$, the surrogate $w_\eta(\cdot)$ approaches a discontinuity, causing its RKHS norm to grow significantly, in which case an excessively large $\eta$ would require a prohibitive number of observational samples to estimate accurately. 
Thus, $\eta$ serves as a trade-off parameter balancing the approximation bias and the learnability (estimation variance) of the objective.
Next, we characterize statistical performance of our proposed nested estimator $\hat{J}_\eta(\a; \x)$ given a fixed $\eta$.

\begin{theorem}
    \label{thm:nested_bound}
    Let $N$ be the sample size of the observational training data, and let $\lambda_h, \lambda_x > 0$ denote the regularization parameters for the inner (Stage 1) and outer (Stage 2) KRR estimators. 
    Suppose that $w_\eta$ is bounded in the RKHS norm, i.e., $\|w_\eta\|_{\mathcal{H}_{k_y}} \leq C_w$. 
    For any context $\mathbf{x}$ and alteration $\mathbf{a}$, the estimation error $\Delta_{\text{est}} \triangleq \left| \hat{J}_\eta(\mathbf{a}; \mathbf{x}) - J_\eta(\mathbf{a}; \mathbf{x}) \right|$ satisfies:
    \begin{equation*}
        \Delta_{\text{est}} = \mathcal{O}_p\left( \sqrt{\lambda_x} + \sqrt{\lambda_h} + \frac{1}{\sqrt{N \lambda_x}} + \frac{1}{\sqrt{N \lambda_h^3}} \right).
    \end{equation*}
\end{theorem}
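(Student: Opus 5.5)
We bound $\Delta_{\text{est}}$ by comparing $\hat{J}_\eta$ with an intermediate quantity through a telescoping split. By Prop.~\ref{prop:identifiable} and the CME identity in Eq.~\eqref{eq:inner_cme_result}, $J_\eta(\a;\x)=\bbE_{\u\sim P(\cdot\mid\x)}[\psi(\x,\u,\a)]=\langle \psi(\x,\cdot,\a),\mu_{\U\mid\x}\rangle_{\sH_{k_u}}$. The estimator in Eq.~\eqref{eq:final_vector_objective} equals $\langle \hat\psi(\x,\cdot,\a),\hat\mu_{\U\mid\x}\rangle_{\sH_{k_u}}$. To see this, note that by Ass.~\ref{ass:decompose} the map $\u\mapsto\hat\psi(\x,\u,\a)=\sum_i\alpha_i k_x(\x_i,\x)k_a(\a_i,\a)k_u(\u_i,\u)$ lies in $\sH_{k_u}$, and Eq.~\eqref{eq:confounder_bridge_derivation} is exactly this inner product. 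We therefore write
\[
\Delta_{\text{est}}\le \big|\langle \hat\psi-\psi,\hat\mu_{\U\mid\x}\rangle\big| + \big|\langle \psi,\hat\mu_{\U\mid\x}-\mu_{\U\mid\x}\rangle\big| =: T_1+T_2,
\]
where we assume (as a regularity condition, analogous to $C_w$) that $\psi(\x,\cdot,\a)\in\sH_{k_u}$ with bounded norm, and that the relevant conditional expectation operators satisfy the standard source condition (e.g.\ $\mu_{\U\mid\x}$ and $\mu_{\Y\mid\h}$ arise from Hilbert--Schmidt operators in the range of the covariance operator).

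\textbf{Outer term $T_2$.} Using Cauchy--Schwarz, $T_2\le\|\psi(\x,\cdot,\a)\|_{\sH_{k_u}}\|\hat\mu_{\U\mid\x}-\mu_{\U\mid\x}\|_{\sH_{k_u}}$. We then invoke the standard KRR/CME analysis (e.g.\ Grünewälder et al., Singh et al.) with the bias--variance decomposition. The regularization bias of $(C_{xx}+\lambda_x)^{-1}C_{xx}$ under the source condition is $\sO(\sqrt{\lambda_x})$. The variance is controlled via Hoeffding-type concentration in Hilbert space of $\hat C_{xx}-C_{xx}$ and $\hat C_{ux}-C_{ux}$ (each $\sO_p(N^{-1/2})$), multiplied by $\|(\hat C_{xx}+\lambda_x)^{-1}\|\le\lambda_x^{-1}$, and sharpened by a half-power argument to give $\sO_p(1/\sqrt{N\lambda_x})$. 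Bounded kernels keep all constants finite.

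\textbf{Inner term $T_1$.} We have $T_1\le\|\hat\psi(\x,\cdot,\a)-\psi(\x,\cdot,\a)\|_{\sH_{k_u}}\|\hat\mu_{\U\mid\x}\|_{\sH_{k_u}}$. The second factor is $\sO_p(1)$ because it converges to the bounded $\mu_{\U\mid\x}$. For the first factor, write $\hat\psi-\psi=\langle w_\eta,\hat\mu_{\Y\mid\h}-\mu_{\Y\mid\h}\rangle$ via the conditional expectation operator: $\hat\psi(\h)=\langle (\hat C_{hh}+\lambda_h)^{-1}\hat C_{hy}w_\eta, k_h(\h,\cdot)\rangle$. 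The error must be measured in the $\sH_{k_u}$-section norm, i.e.\ essentially in the $\sH_{k_h}$ norm of the operator applied to $w_\eta$, rather than pointwise, with a factor $C_w$. The bias is again $\sO(\sqrt{\lambda_h})$. The variance is where the extra powers enter: the difference $(\hat C_{hh}+\lambda_h)^{-1}-(C_{hh}+\lambda_h)^{-1}=(\hat C_{hh}+\lambda_h)^{-1}(C_{hh}-\hat C_{hh})(C_{hh}+\lambda_h)^{-1}$ costs $\lambda_h^{-1}\cdot N^{-1/2}\cdot\lambda_h^{-1/2}$ after using the source condition on one side. This gives $\sO_p(1/\sqrt{N\lambda_h^3})$, matching the rate of Singh et al.'s kernel IV / treatment-effect analysis. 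Restricting $\sH_{k_h}$ to a $\u$-section uses the product structure of Ass.~\ref{ass:decompose}, with $\|k_x(\x,\cdot)\|,\|k_a(\a,\cdot)\|$ bounded.

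\textbf{Main obstacle.} The hard step is controlling the inner error in an RKHS (not $L^2$) norm uniformly enough to pair with $\hat\mu_{\U\mid\x}$. The issue is that $\hat\psi$ is built from the same sample as $\hat\mu_{\U\mid\x}$, so $T_1$ is not a clean product of independent errors. We plan to handle this by bounding each factor in operator norm deterministically on a high-probability event and taking a union bound, which avoids independence arguments. The cost is the cruder $\lambda_h^{-3/2}$ rather than $\lambda_h^{-1/2}$ in the variance term. Combining $T_1+T_2$ then yields the stated $\sO_p(\sqrt{\lambda_x}+\sqrt{\lambda_h}+1/\sqrt{N\lambda_x}+1/\sqrt{N\lambda_h^3})$.
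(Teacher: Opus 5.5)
Your proof is correct, but it telescopes in the opposite order from the paper.

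\textbf{The paper's route.} The paper splits $\hat J_\eta - J_\eta = \langle \hat\psi, \hat\mu_{\mathbf{U}\mid\mathbf{x}} - \mu_{\mathbf{U}\mid\mathbf{x}}\rangle_{\mathcal{H}_{k_u}} + \mathbb{E}_{\mathbf{u}\sim P(\cdot\mid\mathbf{x})}[\hat\psi - \psi]$. It pairs the outer CME error with the \emph{estimated} inner function, and integrates the inner error against the \emph{true} law $P(\cdot\mid\mathbf{x})$. Integration against a probability measure is a contraction in sup norm. So Term II only needs $\sup_{\mathbf{h}}|\hat\psi-\psi| \le C_w \sup_{\mathbf{h}}\|\hat\mu_{\mathbf{Y}\mid\mathbf{h}}-\mu_{\mathbf{Y}\mid\mathbf{h}}\|_{\mathcal{H}_{k_y}}$, which the paper takes directly from the uniform CME lemma. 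Term I uses the pointwise lemma.

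\textbf{Your route.} You pair the outer error with the deterministic $\psi$, and the inner error with $\hat\mu_{\mathbf{U}\mid\mathbf{x}}$. The latter is a signed combination of features rather than a measure, so sup-norm control no longer suffices. You therefore need the stronger $\mathcal{H}_{k_h}$-norm rate for $\hat\psi-\psi$. You transfer it to the $\mathbf{u}$-section via $\|f(\mathbf{x},\cdot,\mathbf{a})\|_{\mathcal{H}_{k_u}} \le \sqrt{k_x(\mathbf{x},\mathbf{x})k_a(\mathbf{a},\mathbf{a})}\,\|f\|_{\mathcal{H}_{k_h}}$, and you derive the rate by operator perturbation rather than citing a lemma.

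\textbf{What your ordering buys.} No random RKHS norm multiplies the outer error. The paper asserts that $\|\hat\psi(\mathbf{x},\cdot,\mathbf{a})\|_{\mathcal{H}_{k_u}}$ is bounded, but this is not automatic. Deterministically one only has $\|\hat\psi\|_{\mathcal{H}_{k_h}} \le 1/(2\sqrt{\lambda_h})$. Its $\mathcal{O}_p(1)$-boundedness really rests on the same RKHS-norm consistency of $\hat\psi$ toward $\psi\in\mathcal{H}_{k_h}$ that you establish for $T_1$. Your assumption that $\psi\in\mathcal{H}_{k_h}$ (with a source condition) is thus not genuinely extra. It follows from $w_\eta\in\mathcal{H}_{k_y}$ together with the range/Hilbert--Schmidt condition underlying the cited CME lemmas.

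\textbf{Two remarks on your narrative.} First, the shared sample between $\hat\psi$ and $\hat\mu_{\mathbf{U}\mid\mathbf{x}}$ is harmless: Cauchy--Schwarz and $\mathcal{O}_p(1)\cdot\mathcal{O}_p(r_N)=\mathcal{O}_p(r_N)$ need no independence. It is also not what produces $\lambda_h^{-3/2}$; that exponent comes from demanding RKHS-norm or uniform control rather than $L^2$ control. Second, since $\|(C_{hh}+\lambda_h)^{-1}C_{hh}\psi\|\le\|\psi\|$, your variance term is actually $\mathcal{O}_p(1/(\lambda_h\sqrt{N}))$. This is dominated by the stated $1/\sqrt{N\lambda_h^3}$ for $\lambda_h\le 1$, so your accounting is loose but valid.
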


Thm.~\ref{thm:nested_bound} explicitly characterizes the learning speed of our nested estimator. 
This bound implies the consistency of the proposed method: provided that the regularization parameters decay appropriately (\ie, $\lambda_x, \lambda_h \to 0$) while satisfying $N \lambda_x \to \infty$ and $N \lambda_h^3 \to \infty$ as $N \to \infty$, the estimation error converges to zero in probability.
For instance, a valid configuration satisfying these conditions is $\lambda_x = \sO(N^{-1/2})$ and $\lambda_h = \sO(N^{-1/4})$.
With appropriate assumptions on the joint distribution of \(\X\), \(\U\), \(\A\) and \(\Y\), better rates can be obtained~\citep{GrunewalderLGBPP12}.

Finally, as Thm.~\ref{thm:surr} characterizes the approximation error of the smoothed surrogate objective and Thm.~\ref{thm:nested_bound} bounds the estimation error induced by finite samples, these results together establish that, with appropriately chosen smoothing hyperparameters and regularization hyperparameters, the established learnable estimator $\hat{J}_\eta$ provides a reliable proxy for the original intractable AUF objective.

%% file: sections/experiments.tex
\section{Experiments}
\label{sec:experiment}
% In this section, we evaluate our approach against rehearsal learning baselines. Experiments were conducted on an NVIDIA Tesla A100 GPU and two Intel Xeon Platinum 8358 CPUs, while specific details are listed in Appx.~\ref{app:experiment}.
% We first briefly introduce the experimental datasets as follows. 
In this section, we present empirical results validating the effectiveness of our proposed method.
Detailed data generation procedures and benchmark construction details are provided in Appx.~\ref{app:experiment}.

\subsection{Experimental setup}
\label{sec:experiment_setup}
We evaluate our method against rehearsal learning baselines, including QWZ23~\citep{qint2023srm}, MICNS~\citep{du2024micns}, CARE~\citep{du2025CARE}, and Grad-Rh~\citep{qin2025gradient}. 
It is worth noting that CARE is theoretically optimal for linear additive Gaussian data, and Grad-Rh is a heuristic approach compatible with non-linear scenarios.
All experiments were conducted on an NVIDIA Tesla A100 GPU and two Intel Xeon Platinum 8358 CPUs.

For the completed benchmark settings, the initial observational dataset size is set to $|\mathcal{D}| = 1,000$, and we report the average performance under $5$ different random seeds.
In our proposed Alg.~\ref{alg:nested_krr}, optimization is initialized with $K=20$ starting points and a learning rate of $5^{-1}$. Kernel bandwidths are determined using the median heuristic~\citep{gretton12a} with minor dataset-specific fine-tuning. The smoothing parameter $\eta$ for the indicator approximation is adaptively selected from $\{5, 10, 20\}$ based on the proportion of positive samples (\ie, \(\y_i\in\sS\)) in $\mathcal{D}$; a larger $\eta$ is employed when positive samples are abundant to allow for sharper approximation.
Lastly, decision quality is evaluated as the probability of successfully avoiding the undesired future, calculated via the empirical success frequency over $100$ Monte Carlo trials sampled from the ground-truth rehearsal distribution.

\subsection{Datasets}
\label{sec:experiment_datasets}
\noindent \textbf{Linear settings.}
We utilize the Bermuda dataset~\citep{courtney2017environmental, bermuda_dataset} that records environmental variables in the Bermuda area, with the objective of maintaining a high net coral ecosystem calcification (NEC) level. 
The data-generating graph structure is known~\citep{courtney2017environmental}, and the parameters are derived by fitting linear models. The dimensions of $\X$, $\Z$, and $\Y$ are 3, 7, and 1, respectively, and the desired region is defined as $\sS=\{\text{NEC}\in[0.5, 2]\}$.
Additionally, we employ the synthetic dataset consistent with~\citet{du2025CARE}, where the dimensions of $\X$, $\Z$, and $\Y$ are 2, 4, and 2, respectively, and the desired region $\sS$ for $\Y$ is a rectangle area.

\noindent \textbf{Nonlinear settings.} 
We first extract a real-data-derived semi-synthetic diabetes benchmark from NHANES 2011--2018~\citep{nhanes_cdc}, using continuous actionable lifestyle / metabolic variables and two glycemic outcomes, HbA1c and fasting plasma glucose (FPG). 
Its desired region requires both outcomes to fall within normal glycemic ranges, $\sS_{\text{NHANES}}=\{(\text{HbA1c}, \text{FPG}) \mid \text{HbA1c}<5.7,\ \text{FPG}<100~\text{mg/dL}\}$, where HbA1c reflects longer-term blood-glucose exposure and FPG measures fasting blood glucose; together they characterize clinically meaningful glycemic status.
We also implement the motivating loan approval example illustrated in \Fig~\ref{fig:auf_example}. To facilitate visualization, we focus on altering solely on the interest rate ($A_2$). The objective is to ensure a high repayment rate ($Y_1$) while maintaining a favorable return on ROI ($Y_2$), thus the desired region is defined as $\sS = \{ (Y_1, Y_2) \mid Y_1 \geq 0.6, Y_2 \geq 0.3 \}$. 
Furthermore, to evaluate performance under complex conditions, we implement several synthetic datasets featuring non-linear mechanisms (including square, trigonometric, and Sigmoid transformations) and non-Gaussian randomness (including uniform, beta, and exponential distributions). 
Appx.~\ref{app:nhanes_benchmark} provides the construction details of the NHANES benchmark.
% Additionally, we assume that the collected data may be right-censored for 3 variables (including $\Y$).

% \begin{figure}[b]
%   \centering
%   \subfigure[$\sS$ of Synthetic data]{\includegraphics[width=0.455\linewidth]{./figures/region1.pdf}\label{fig:region1}}$\quad$
%   \subfigure[$\sS$ of Bermuda data]{\includegraphics[width=0.46\linewidth]{./figures/region2.pdf}\label{fig:region2}}
%   \vskip -0.15in
%   \caption{Illustration for desired regions of two datasets}
%   \label{fig:desired_region}
%   \vskip -0.1in
% \end{figure}

% QWZ23~\citep{qint2023srm} & MICNS~\citep{du2024micns} & Grad-Rh~\citep{qin2025gradient}

\subsection{Results}
Experimental results are presented in Tab.~\ref{tab:main_results}. Baseline results denote the natural AUF probability, \ie, $\mathbb{P}(\mathbf{Y}\in\sS\mid\x)$, observed without any alterations.
In the completed linear additive Gaussian settings, our method outperforms all rehearsal learning baselines except CARE. This performance gap is expected, as CARE is theoretically proven to be optimal under these specific conditions. However, it is crucial to highlight that CARE (and most other baselines, excluding Grad-Rh) know the true parametric family in this case, in contrast, our approach is non-parametric thus does not use prior knowledge of the underlying functional forms.

In the non-linear lending example and synthetic settings, our method demonstrates superior adaptability and robustness, outperforming all comparison methods.
On NHANES, all adapted action methods improve over the no-action baseline on average under the fitted NHANES generator, and our tuned non-parametric adapter obtains the highest mean AUF probability.
This result highlights the potential of rehearsal learning for real-world medical decision problems.

\label{sec:experiment_results}
\begin{table}[t]
    \centering
    \scriptsize % 整体使用小字号，与模板一致
    \caption{The AUF probability $\mathbb{P}(\mathbf{Y} \in \mathcal{S} \mid \mathbf{x}, \mathring{\mathbf{a}})$ evaluated on six benchmark settings. Each completed value is estimated using 100 Monte Carlo samples, averaged over 5 random seeds. Results are reported as mean $\pm$ standard deviation. The best completed results are highlighted in bold.}
    \label{tab:main_results}
    % 使用 resizebox 自动缩放表格到页面宽度的 96%
    \resizebox{\textwidth}{!}{%
    \begin{tabular}{lcccccc}
        \toprule
        \rule{0pt}{1.2ex} % 增加表头行高
        Dataset & Baseline (No action) & QWZ23 & MICNS & CARE & Grad-Rh & Ours \\
        \midrule
        
        % Bermuda
        \rule{0pt}{1.2ex} % 增加行高
        Bermuda & 0.222 {\tiny$\pm$ 0.032} & 0.676 {\tiny$\pm$ 0.027} & 0.668 {\tiny$\pm$ 0.079} & \textbf{0.706 {\tiny$\pm$ 0.065}} 
        & 0.674 {\tiny$\pm$ 0.030}  & 0.702 {\tiny$\pm$ 0.062} \\
        
        % Linear-Syn
        \rule{0pt}{1.2ex}
        Lin-Syn1 & 0.166 {\tiny$\pm$ 0.160} & 0.938 {\tiny$\pm$ 0.032} & 0.912 {\tiny$\pm$ 0.038} 
        & \textbf{0.956 {\tiny$\pm$ 0.019}} & 0.934 {\tiny$\pm$ 0.022} & 0.942 {\tiny$\pm$ 0.016} \\
        
        \midrule % 分隔线

        % NHANES
        \rule{0pt}{1.2ex}
        NHANES & 0.402 {\tiny$\pm$ 0.257} & 0.562 {\tiny$\pm$ 0.217} & 0.536 {\tiny$\pm$ 0.195} 
        & 0.526 {\tiny$\pm$ 0.211} & 0.426 {\tiny$\pm$ 0.203} & \textbf{0.596 {\tiny$\pm$ 0.179}} \\
        
        % Bank
        \rule{0pt}{1.2ex}
        BankExp & 0.598 {\tiny$\pm$ 0.066} & 0.563 {\tiny$\pm$ 0.019} & 0.576 {\tiny$\pm$ 0.014} 
        & 0.640 {\tiny$\pm$ 0.054} & 0.698 {\tiny$\pm$ 0.055} & \textbf{0.820 {\tiny$\pm$ 0.078}} \\
        
        % Dataset 5
        \rule{0pt}{1.2ex}
        Non-Syn1 & 0.064 {\tiny$\pm$ 0.014} & 0.130 {\tiny$\pm$ 0.083} & 0.114 {\tiny$\pm$ 0.091} 
        & 0.078 {\tiny$\pm$ 0.058} & 0.291 {\tiny$\pm$ 0.107} & \textbf{0.430 {\tiny$\pm$ 0.061}} \\

        % Non-Syn
        \rule{0pt}{1.2ex}
        Non-Syn2 & 0.186 {\tiny$\pm$ 0.068} & 0.320 {\tiny$\pm$ 0.237} & 0.324 {\tiny$\pm$ 0.196} 
        & 0.392 {\tiny$\pm$ 0.244} & 0.362 {\tiny$\pm$ 0.243} & \textbf{0.584 {\tiny$\pm$ 0.086}} \\
        
        % Dataset 6
        % \rule{0pt}{1.2ex}
        % Non-Syn3 & 0.000 {\tiny$\pm$ 0.000} & 0.000 {\tiny$\pm$ 0.000} & 0.000 {\tiny$\pm$ 0.000} 
        % & 0.000 {\tiny$\pm$ 0.000} & 0.000 {\tiny$\pm$ 0.000}  & \textbf{0.000 {\tiny$\pm$ 0.000}} \\
        
        \bottomrule
    \end{tabular}%
    }
    % \vspace{0.1in}
\end{table}

\begin{figure}[t] % [t] 参数要求 LaTeX 尝试将其置顶
\centering
    \includegraphics[width=\linewidth]{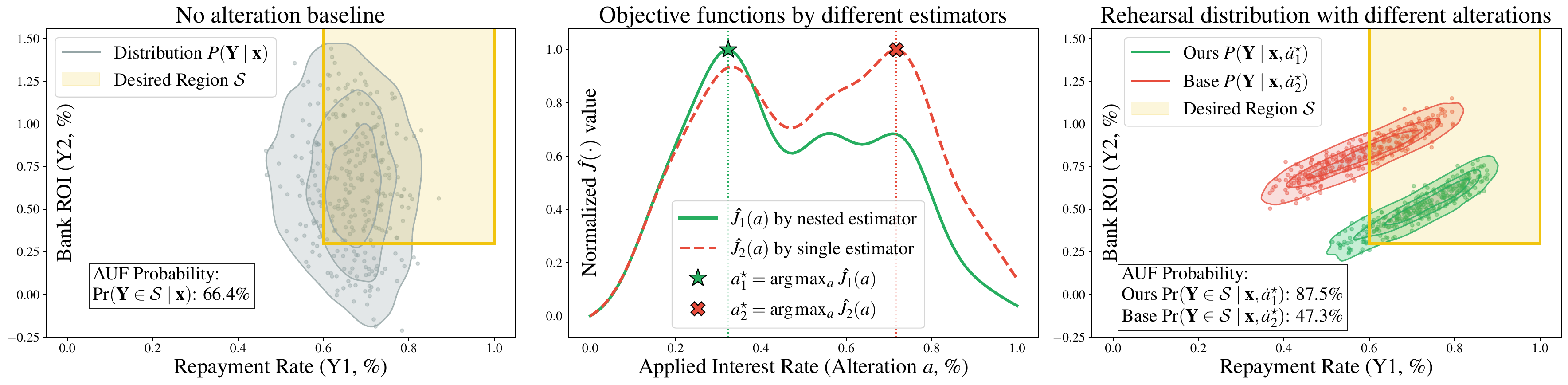}
    \caption{Illustrative example of loan interest rate optimization as shown in Fig.~\ref{fig:auf_example}. The left panel shows the distribution conditioned on context \(\x\) without active alterations. The middle panel displays the comparison of objective functions estimated by single estimator and nested estimator. The right panel illustrates the rehearsal distributions under the alterations optimized by the two estimators.}
\label{fig:auf_bank} % 总图的引用标签
% \vskip -0.03in
\end{figure}

To illustrate the necessity of optimizing the alterational objective $\mathbb{E}_{\mathbf{y} \sim P(\cdot \mid \mathbf{x}, \mathring{\mathbf{a}})}[w_\eta(\mathbf{Y})]$ (identifiable as in Prop.~\ref{prop:identifiable}), rather than directly maximizing $\mathbb{E}_{\mathbf{y} \sim P(\cdot \mid \mathbf{x}, \mathbf{a})}[w_\eta(\mathbf{Y})]$ (pure conditional), we present a comparative analysis on the lending task in Fig.~\ref{fig:auf_example}. Initially, distribution $P(\mathbf{Y}\, |\, \mathbf{x})$ shows a baseline probability of 66.4\% within the desired region $\mathcal{S}$ without active alterations. 
Driven by the goal to actively perform feasible alterations to increase the probability of outcomes falling into $\mathcal{S}$, we compare optimal alterations suggested by maximizing the two estimators. 
Maintaining consistent hyperparameters (\eg, kernel bandwidth), the landscape estimated by single estimator (constructed to estimate $\hat{\mathbb{E}}_{\mathbf{y} \sim P(\cdot \mid \mathbf{x}, \mathbf{a})}[w_\eta(\mathbf{Y})]$) is misled by the unactionable industry stability $U$ that influences both \(\A\) and \(\Y\), suggesting a suboptimal action $a_2^*$. In contrast, our nested estimator explicitly eliminates this confounding, correctly identifying the true optimal alteration $a_1^*$. 
Consequently, the resulting rehearsal distributions confirm that the misled alteration yields a detrimental effect (AUF probability 47.3\%), performing even worse than the no-alteration baseline. Conversely, our method successfully shifts outcomes into $\mathcal{S}$ with an AUF probability of 87.5\%, demonstrating the critical necessity of modeling $\mathbb{E}_{\mathbf{y} \sim P(\cdot \mid \mathbf{x}, \mathring{\mathbf{a}})}[w_\eta(\mathbf{Y})]$ by our nested estimator.

%% file: sections/limitations_conclusion.tex
\section{Conclusion}
\label{sec:conclusion}

In this paper, we presented the first non-parametric rehearsal learning approach for the AUF problem, addressing the key limitations of existing rehearsal-based methods that rely on restrictive parametric assumptions.
By avoiding specific functional forms of the data generation process, our approach extends rehearsal learning to nonlinear and non-additive settings.
Methodologically, we introduced a smooth Probit approximation to handle the discontinuous desirability indicator and developed a KRR-based nested estimator for the surrogate AUF objective.
This estimator enables direct and effective learning of the AUF objective from observational data, grounded in the theory of conditional mean embeddings.
Theoretically, we established identifiability of the AUF objective and derived error bounds for both the smooth approximation and statistical estimation, guaranteeing consistency of the proposed approach.
Empirically, we evaluated the method on synthetic and real-data-derived semi-synthetic benchmarks and added a NHANES diabetes benchmark to stress-test continuous actionable variables and two-dimensional clinical success regions.

% Future work may focus on scaling the proposed method to large-scale datasets using kernel approximation techniques, such as Nystr\"om methods or random Fourier features, and exploring its integration with deep kernel learning for high-dimensional inputs.
% \clearpage
% \newpage
% \section*{Impact Statement}
% This paper presents work whose goal is to advance the field of Machine
% Learning. There are many potential societal consequences of our work, none
% which we feel must be specifically highlighted here.

%% file: appendix/appendix_main.tex
\onecolumn
\input{appendix/related_work}
\input{appendix/structural_rehearsal_models}
\input{appendix/discussion}
\input{appendix/proofs}
\input{appendix/experimental_details}
\input{appendix/scalability_sensitivity}
\input{appendix/nhanes_benchmark}

%% file: appendix/related_work.tex
\section{Related work}
\label{app:related_works}

\noindent\textbf{Rehearsal learning.}
Rehearsal learning exploits influence relations among variables to support decision-making that optimizes the AUF probability~\citep{zhouzh2022rehearsal}.
Existing rehearsal approaches adopt parametric formulations, relying on assumptions like linear additive systems to derive probabilistic constraints~\citep{qint2023srm,du2024micns,tao2025AUFSD} or tractable objectives~\citep{du2025CARE}. 
\citet{qin2025gradient} extend the parametric paradigm to nonlinear settings using conditional normalizing flows~\citep{winkler2019learning,ardizzone2019guided} to estimate generation paramters, but replace AUF objective with heuristic surrogates. 
While offering tractability, this method deviates from true AUF objective and often leads to suboptimal decisions; Appx.~\ref{app:discuss} gives a concrete counter-example. 
In contrast, our approach directly targets AUF probability without heuristic substitutions.
% We also acknowledge a concurrent submission~\citep{anonymous2026concurrent} (per ICML policy), which primarily focuses on order-learning; its decision-making component follows \citet{qin2025gradient} and thus inherits the same limitation. 

\noindent\textbf{Reinforcement Learning.} 
RL has demonstrated strong empirical performance across various decision-making problems~\citep{sutton2018reinforcement}. 
Conventional RL algorithms~\citep{John2017PPO, haarnoja2018SAC} typically depend on repeated and potentially costly interactions with the environment, rendering them unsuitable for AUF settings where such interactions are expensive and limited~\citep{zhouzh2022rehearsal,qint2023srm}. 
Recent progress in offline and hybrid offline--online RL~\citep{song2022hybrid, pong2022offline, pmlr-v202-li23av, NEURIPS2023_c1aaf7c3} seeks to alleviate this issue by exploiting pre-collected data; 
however, hybrid approaches presume the feasibility of online interaction for policy refinement, which remains inapplicable to high-stakes AUF scenarios and thus cannot handle the confounding bias inherent in passive observational data.
Besides, several model-based RL approaches aim to mitigate confounding effects in offline policy evaluation~\citep{kallus2020confounding, bruns2021model, kausik2024offline}. 
In contrast to our framework (utilizing the order structure without imposing strong assumptions on the confounding), these methods often rely on restrictive structural conditions and assume that confounders influence only transition dynamics. 

\noindent\textbf{Causality.} 
A substantial body of work has investigated the use of structural models for decision-making, with most approaches rooted in the framework of structural causal models (SCMs)~\citep{spirtes2000causation}. 
Several studies focus on system identification~\citep{he2008active, zhang2009, KocaogluDV17, zhang2017causal, cai2018, shiragur2024causal}. 
However, these methods typically emphasize recovering the underlying causal structure itself, rather than addressing downstream objectives such as estimating or optimizing decision-relevant causal effects. 
Additionally, several recent works also combine kernel mean embeddings with causal or policy quantities~\citep{MuandetKSM21, NEURIPS2025_49a77f55}; however, they mainly study how to estimate or evaluate the distribution induced by a fixed treatment or policy, rather than how to choose a feasible context-dependent action. 
In parallel, causal bandit frameworks have been proposed to tackle optimal arm identification problems~\citep{bareinboim2015bandits, latt2016causalbandit, sen2017identifying, lee2018structural, zhang2019RL, NEURIPS2020_61a10e6a, LuMT21, park2025structural, park2025transportabilitystructuralcausalbandits}. 
Such approaches generally seek a single, globally optimal action that maximizes the expected reward. 
By contrast, rehearsal learning methods naturally accommodates mutually influenced relationships among variables and targets the maximization of the AUF probability by selecting context-dependent optimal alterations conditioned on the observed context \(\x\), and allows for a more expressive specification of the desired outcome region \(\sS\), extending beyond simple expectation maximization objectives.

%% file: appendix/structural_rehearsal_models.tex
\section{Additional background on graph-based rehearsal models}
\label{app:graph_based_background}

This appendix recalls the graph-based structural rehearsal model terminology used in prior work.
The main identifiability result in Sec.~\ref{sec:smooth_identifiability} does not assume access to such a full graph; it relies on the known order and the valid decomposition in Def.~\ref{def:Z_decompose}.

The structural rehearsal model is a graphical model that characterizes influence relations for decision-making, consisting of (i) a set of (potentially time-varying) rehearsal graphs; and (ii) their associated generation equations~\citep{qint2023srm}. 
Rehearsal graph $G = (\V, \E)$ captures the qualitative generating relations among variables, where each vertex corresponds to a variable in the decision task. 
Notably, rehearsal graphs employ bi-directional edges to connect variables that are mutually influenced. 
The formal definitions are provided below following~\citet{qint2023srm}.

\begin{definition}[Mixed graph]
    \label{def:mixed_graph}
    Let $G=(\V, \E)$ be a graph, where $\V$ and $\E$ denote the vertices and the edges. $G$ is a \emph{mixed graph} if for any distinct vertices $u, v \in \V$, there is at most one edge connecting them, and the edge is either \emph{directional} ($u \rightarrow v$ or $u \leftarrow v$) or \emph{bi-directional} ($u \leftrightarrow v$).
\end{definition}

\begin{definition}[Bi-directional clique]
    \label{def:bi_clique}
    A \emph{bi-directional clique} $C=\left(\V^c, \E^c\right)$ of a mixed graph $G=(\V, \E)$ is a complete subgraph induced by $\V^c \subseteq \V$ such that any edge $e \in \E^c$ is bi-directional. $C$ is \emph{maximal} if adding any other vertex does not induce a bi-directional clique.
\end{definition}

\begin{definition}[Rehearsal graph]
    \label{def:rh_graph}
    Let $G=(\V, \E)$ be a mixed graph and $\left\{C_i\right\}_{i=1}^l$ denote all maximal bi-directional cliques of $G$, where $C_i=\left(\V_i^c, \E_i^c\right)$. $G$ is a \emph{rehearsal graph} iff: (i) $\V_i^c \cap \V_j^c=\emptyset$ for any $i \neq j$; (ii) for any $i \in[l]$, if there is any edge pointing from some $u \in \V \backslash \V_i^c$ to some $v \in \V_i^c$, then for any $v \in \V_i^c$, $u \rightarrow v$; and (iii) the directional edges permit a topological ordering for $\left\{C_i\right\}_{i=1}^l$.
\end{definition}

Associated with the graphical representation is a set of local conditional distributions, which characterizes the generating process of variables quantitatively. 
Given a rehearsal graph $G$, these distributions are defined over the set of bi-directional cliques $\{C_i\}_{i=1}^l$. 
Let $\PA_i^G \triangleq \{u \mid \exists v \in \V_i^c, u \to v \text{ in } G\}$ denote parents of clique $C_i$, representing external variables that influence the clique. 
The values of variable sets $\V_i^c$s are generated based on their parents $\PA_i^G$ according to a general conditional distribution $P(\V_i^c \mid \PA_i^G)$ and the joint distribution $P(\V)$ factorizes as \( P(\V) = \prod_{i=1}^l P(\V_i^c \mid \PA_i^G)\).

\setlength{\intextsep}{2pt} % 将环绕图片的间距设置为零
\begin{wrapfigure}[10]{r}{0.64\textwidth}
  \centering
  \begin{subfigure}{0.31\linewidth}
      \centering
      \includegraphics[width=\linewidth]{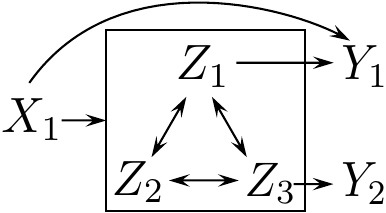}
      \caption{$G$}
      \label{fig:rh_graph_1}
  \end{subfigure}
  \hfill
  \begin{subfigure}{0.31\linewidth}
      \centering
      \includegraphics[width=\linewidth]{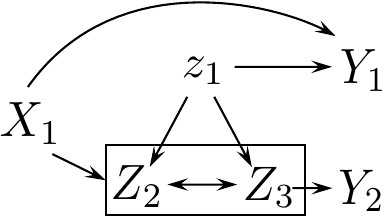}
      \caption{$G^{\mathring{z}_1}$}
      \label{fig:rh_graph_2}
  \end{subfigure}
  \hfill
  \begin{subfigure}{0.31\linewidth}
      \centering
      \includegraphics[width=\linewidth]{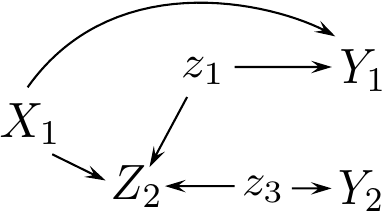}
      \caption{$G^{\mathring{z}_1, \mathring{z}_3}$}
      \label{fig:rh_graph_3}
  \end{subfigure}
  \caption{\ref{fig:rh_graph_1} is an example rehearsal graph, while \ref{fig:rh_graph_2} and \ref{fig:rh_graph_3} are graphs after rehearsal operations \(\mathring{z}_1\) and \(\mathring{z}_1, \mathring{z}_3\), characterizing the data-generating processes after certain alterations are performed.}
  \label{fig:three_graphs}
\end{wrapfigure}

To simulate the results of potential decisions in this graph-based formulation, one can perform rehearsal operation $\mathbf{S} \aleq \s$ (or abbreviated as \(\mathring{\s}\)).
As illustrated in Fig.~\ref{fig:rh_graph_2}, applying $\mathring{\s}$ performs a graph surgery: it severs all incoming edges to decision variables \(\mathbf{S}\) while assigning them specific values $\s$, structurally isolating the action from its natural influences.
This operation blocks spurious correlations induced by confounders, necessitating a sharp distinction between distribution $P(\cdot \mid \mathring{\s})$ and observational conditional $P(\cdot \mid \s)$.

%% file: appendix/discussion.tex
\section{Discussion}
\label{app:discuss}

\setlength{\intextsep}{12pt} % 将环绕图片的间距设置为零
\begin{wrapfigure}[14]{r}{0.55\textwidth}
    \vspace{-15pt}
    \centering
    \includegraphics[width=0.55\textwidth]{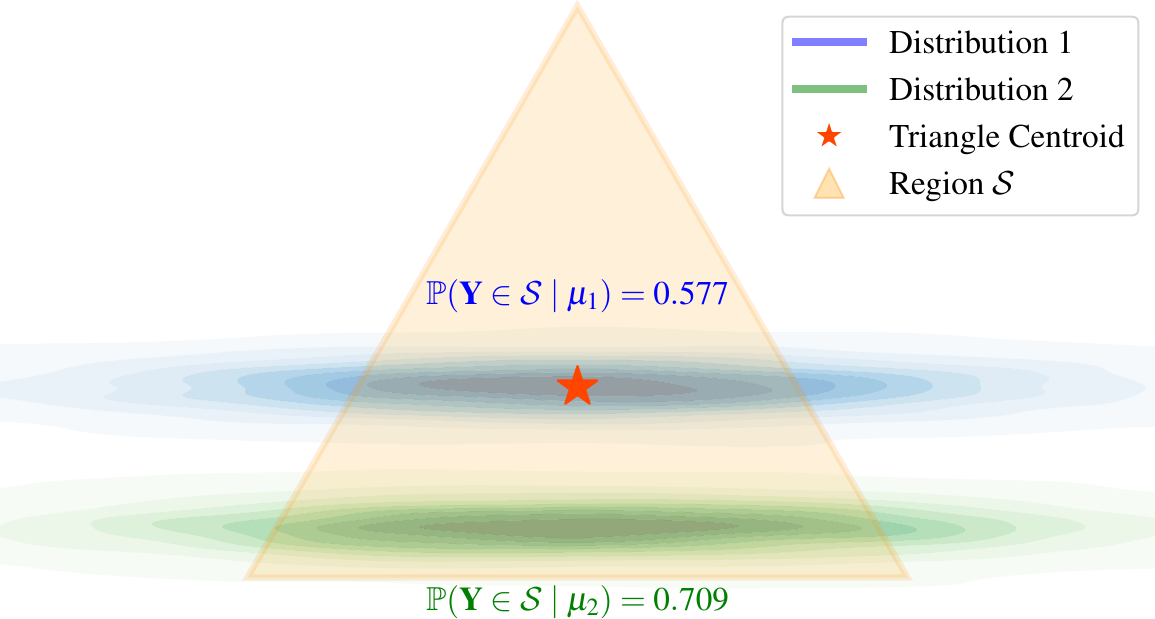}
    % \vspace{-0.5in}
    \caption{A counter-example showing the misalignment between distance minimization and true AUF objective.}
    \label{fig:counter_example}
\end{wrapfigure}
Prior rehearsal learning methodologies have been predominantly confined to the assumption that the underlying generation mechanisms follow linear additive noise models. 
While \citet{qin2025gradient} propose a flow-based rehearsal learning method purportedly adaptable to nonlinear systems, a critical limitation lies in its reliance on a heuristic surrogate objective. 
To circumvent the intractability of directly optimizing the AUF probability, \citet{qin2025gradient} propose identifying the alteration $\mathbf{a}$ that minimizes $\|\bm{\mu}_{\mathbf{Y}|\mathbf{x}, \mathring{\mathbf{a}}} - \mathbf{c}\|_2$, where $\mathbf{c}$ denotes the centroid of region $\mathcal{S}$. However, this heuristic often misaligns with true AUF objective (maximizing $\bbP(\Y\in\sS\mid\mathbf{x}, \mathring{\mathbf{a}})$). As illustrated in Fig.~\ref{fig:counter_example} ($\mathcal{S}$ is an equilateral triangle, a common polytope), the strategy minimizing the heuristic loss (blue) perfectly aligns the distribution mean with the centroid ($\|\bm{\mu}_1 - \mathbf{c}\| = 0$) yet only yields a success probability of $0.577$. Conversely, an alternative strategy (green) explicitly violates the heuristic condition ($\|\bm{\mu}_2 - \mathbf{c}\| > 0$) but achieves a significantly higher probability of $0.709$. This demonstrates that this distance-based surrogate can result in suboptimal decisions, diverging from the goal of AUF.

To surmount this limitation, we leverage CMEs to reconstruct the optimization problem. We begin by approximating the discontinuous indicator function $\bbI(\cdot)$ with a continuous one $w_\eta(\cdot)$, allowsing us to reformulate the maximization of $\bbP(\Y\in\sS\mid\mathbf{x}, \mathring{\mathbf{a}})$ into the maximization of the inner product between $w_\eta$ and the CME of $\mathbf{Y}$ within the RKHS. Subsequently, we nestedly utilize  KRR to estimate the CME, yielding a kernel-based surrogate AUF objective. Theoretically, Thm.~\ref{thm:surr} and Thm.~\ref{thm:nested_bound} guarantee that the deviation between our surrogate and the true AUF objective is bounded, ensuring optimization consistency. A significant challenge inherent in our non-parametric approach is hyperparameter selection, including $\eta$ for smooth approximation, bandwidths of RBF kernels, and regularization parameters in KRR. In this work, hyperparameters were determined empirically, referencing established heuristics in kernel literature~\citep{gretton12a}. A critical avenue for future research is to explore principled methods for hyperparameter selection.

%% file: appendix/proofs.tex
\section{Proofs}
\label{app:proof}
In this section, we provide proof for claims in the main text.

\subsection{Proof of Proposition~\ref{prop:identifiable}}

\begin{prop1}
    Let $\Z = \U \cup \A \cup \U^{\text{post}}$ be the decomposition defined in Def.~\ref{def:Z_decompose}. The AUF objective defined by the smooth desirability approximation $w_\eta(\cdot)$ under the feasible alteration $\mathring{\a}$ is identifiable from the observational distribution as:
    \begin{equation*}
      \bbE_{\y\sim P\left(\cdot \mid \x, \mathring{\a}\right)}\!\!\left[w_\eta(\Y)\right]
      =
      \bbE_{\u\sim P(\cdot \mid \x)}\!\!\left[\bbE_{\y\sim P(\cdot\mid \x, \u, \a)}\!\!\left[ w_\eta(\Y) \right]\right].
    \end{equation*}
\end{prop1}

\begin{proof}[Proof of Prop.~\ref{prop:identifiable}]
    By the sufficient decomposition in Def.~\ref{def:Z_decompose}, $\U$ contains the pre-alteration non-actionable variables sufficient for adjustment given the observed context $\X$.
    Applying the adjustment formula~\citep{pearl2009causality} after conditioning on $\x$, the distribution of $\Y$ under the feasible alteration $\mathring{\a}$ can be identified via the observational distribution:
    \begin{equation*}
        p(\y \mid \x, \mathring{\a}) = \int_\sU p(\y \mid \x, \u, \a) \, p(\u \mid \x) \, d\u.
    \end{equation*}
    The post-alteration variables $\U^{\text{post}}$ are not conditioned on because they occur after the alteration and may lie on downstream pathways from $\A$ to $\Y$; their contribution is marginalized inside $p(\y \mid \x, \u, \a)$.
    Multiplying by $w_\eta(\y)$ and integrating over $\y$ yields the target expectation:
    \begin{equation*}
        \begin{aligned}
            \bbE_{\y\sim P(\cdot \mid \x, \mathring{\a})}[w_\eta(\Y)] 
            &= \int_{\sY}w_\eta(\y)p(\y \mid \x, \mathring{\a})\, d\y\\
            &= \int_{\sY}w_\eta(\y)\left(\int_\mathcal{U}p(\y \mid \x, \u, \a)p(\u\mid \x)\, d\u\right)\, d\y\\
            &=\int_{\mathcal{U}} \left(\int_{\sY}w_\eta(\y) p(\y \mid \x, \u, \a)\, d\y\right)p(\u\mid \x)\, d\u\\
            &= \bbE_{\u\sim P(\cdot \mid \x)}\!\left[\bbE_{\y\sim P(\cdot\mid \x, \u, \a)}\!\left[ w_\eta(\Y) \right]\right].
        \end{aligned}
    \end{equation*}
\end{proof}

\subsection{Proof of Theorem~\ref{thm:surr}}
% We provide the complete proof for Thm.~\ref{thm:surr} below.

\begin{lemma}[\citeauthor{vershynin2018high}, \citeyear{vershynin2018high}, Proposition 2.1.2]
    \label{lemma:tail_gaussian}
    Let $\Phi(x) = \frac{1}{\sqrt{2\pi}} \int_{-\infty}^x e^{\frac{-u^2}{2}} \, du$ denote the CDF of the standard normal distribution. For any $t > 0$, the following inequality holds:
    \begin{equation*}
        \Phi(-t) \leq \frac{1}{\sqrt{2\pi}t} e^{\frac{-t^2}{2}}.
    \end{equation*}
\end{lemma}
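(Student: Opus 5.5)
We prove this by the standard Mills-ratio argument: compare the Gaussian density to an integrand that has an elementary antiderivative on the tail. Write $\varphi(u) = \frac{1}{\sqrt{2\pi}} e^{-u^2/2}$ for the standard normal density. By symmetry of $\varphi$ (substitute $u \mapsto -u$),
\begin{equation*}
\Phi(-t) = \int_{-\infty}^{-t} \varphi(u)\, du = \int_{t}^{\infty} \varphi(u)\, du ,
\end{equation*}
so it suffices to bound the upper tail integral for a fixed $t>0$.

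The key step is to insert the factor $u/t$, which is at least $1$ on the domain of integration $u \ge t$ because $t>0$. Since $\varphi \ge 0$, this gives
\begin{equation*}
\int_{t}^{\infty} \varphi(u)\, du \;\le\; \int_{t}^{\infty} \frac{u}{t}\, \varphi(u)\, du \;=\; \frac{1}{t\sqrt{2\pi}} \int_{t}^{\infty} u\, e^{-u^2/2}\, du .
\end{equation*}
The remaining integral is computed exactly: $u e^{-u^2/2}$ is the derivative of $-e^{-u^2/2}$, which tends to $0$ as $u\to\infty$. Hence $\int_t^\infty u e^{-u^2/2}\,du = e^{-t^2/2}$, and we obtain $\Phi(-t) \le \frac{1}{\sqrt{2\pi}\, t} e^{-t^2/2}$, as claimed.

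There is no real obstacle here. The only points needing care are that $t>0$, which is required both for $u/t \ge 1$ and for dividing by $t$, and that the improper integrals converge, which holds because of the Gaussian decay. In the later use for Thm.~\ref{thm:surr}, this lemma will be applied with $t = \eta\,\delta$, where $\delta$ is the distance to a face of the polytope. Choosing $\delta \asymp \sqrt{\ln \eta}/\eta$ then balances the tail term against the boundary-strip mass, which the bounded-density assumption controls, and yields the $\sqrt{\ln\eta}/\eta$ rate.
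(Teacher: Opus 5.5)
Your proof is correct and complete: symmetry reduces $\Phi(-t)$ to $\int_t^\infty \varphi(u)\,du$, the factor $u/t \ge 1$ is valid on $u \ge t > 0$, and $\int_t^\infty u e^{-u^2/2}\,du = e^{-t^2/2}$ gives exactly the stated constant.

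The paper gives no argument of its own and simply cites Vershynin's Proposition 2.1.2. That proof takes a slightly different route: it substitutes $u = t + y$ and uses $e^{-y^2/2} \le 1$, which gives $e^{-t^2/2}\int_0^\infty e^{-ty}\,dy = e^{-t^2/2}/t$. Your multiply-by-$u/t$ version is the equally standard Mills-ratio alternative and yields the identical bound.

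Your remark on the later use also matches the paper. In the proof of Theorem~\ref{thm:surr} the lemma is applied with $t = \eta\epsilon$ and $\epsilon = \sqrt{2\ln \eta}/\eta$.
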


\begin{lemma}
    \label{lemma:telescoping}
    Let $\{u_k\}_{k=1}^l$ and $\{v_k\}_{k=1}^l$ be two sequences of real numbers such that $u_k, v_k \in [0, 1]$ for all $k = 1, \dots, l$. Then, the absolute difference of their products is bounded by the sum of their absolute pointwise differences, as shown in:
    \begin{equation*}
        \left| \prod_{k=1}^l u_k - \prod_{k=1}^l v_k \right| \leq \sum_{k=1}^l |u_k - v_k|.
    \end{equation*}
\end{lemma}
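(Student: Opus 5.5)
Lemma~\ref{lemma:telescoping} is the statement to prove: for $u_k,v_k\in[0,1]$, $|\prod_k u_k-\prod_k v_k|\le\sum_k|u_k-v_k|$. I would use the standard telescoping (hybrid) argument. Define the mixed products
\[
P_j \;=\; \Big(\prod_{k=1}^{j} v_k\Big)\Big(\prod_{k=j+1}^{l} u_k\Big), \qquad j=0,1,\dots,l,
\]
with empty products equal to $1$. Then $P_0=\prod_k u_k$ and $P_l=\prod_k v_k$.

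Write the difference as a telescoping sum,
\[
\prod_{k=1}^l u_k-\prod_{k=1}^l v_k \;=\; \sum_{j=1}^{l}(P_{j-1}-P_j).
\]
Consecutive terms $P_{j-1}$ and $P_j$ differ only in the $j$-th factor, which is $u_j$ in $P_{j-1}$ and $v_j$ in $P_j$. Hence
\[
P_{j-1}-P_j \;=\; \Big(\prod_{k<j} v_k\Big)(u_j-v_j)\Big(\prod_{k>j} u_k\Big).
\]

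By the triangle inequality,
\[
\Big|\prod_{k=1}^l u_k-\prod_{k=1}^l v_k\Big| \;\le\; \sum_{j=1}^{l}\Big|\prod_{k<j} v_k\Big|\,|u_j-v_j|\,\Big|\prod_{k>j} u_k\Big|.
\]
Every $u_k,v_k$ lies in $[0,1]$, so each product of the remaining factors is in $[0,1]$ and is bounded by $1$ in absolute value. This gives the claimed bound.

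There is no real obstacle. The only care needed is the indexing and the empty-product convention at $j=1$ and $j=l$. Alternatively, one can argue by induction on $l$, using $ab-cd=(a-c)b+c(b-d)$ with $|b|,|c|\le1$.
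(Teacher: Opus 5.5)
Your proposal is correct and is the same as the paper's proof. The paper uses the identical telescoping identity $\prod_{k} u_k - \prod_{k} v_k = \sum_{k} \bigl(\prod_{i<k} v_i\bigr)(u_k - v_k)\bigl(\prod_{j>k} u_j\bigr)$ and bounds each coefficient by $1$. Your explicit use of the triangle inequality to get the absolute value on the left is slightly more careful than the paper's write-up, which only displays the one-sided inequality.
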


\begin{proof}[Proof of Lemma~\ref{lemma:telescoping}]
    The result follows from a standard telescoping sum argument:
    \begin{equation*}
        \prod_{k=1}^l u_k - \prod_{k=1}^l v_k = \sum_{k=1}^l \left( \prod_{i=1}^{k-1} v_i \right) (u_k - v_k) \left( \prod_{j=k+1}^l u_j \right)\leq \sum_{k=1}^l |u_k - v_k|.
    \end{equation*}
    The second inequality holds as $|u_i|, |v_i| \le 1$, thus the magnitude of coefficients is bounded by 1.
\end{proof}

\begin{theorem1}
    Let \(\sS\) be the convex polytope region defined in Eq.~\eqref{eq:region}, and let \(w_\eta(\y)\) be the Probit surrogate of the indicator function \(\bbI(\y\in \sS)\) as defined in Eq.~\eqref{eq:probit}. Assuming the conditional density is bounded, the gap between the true AUF probability and the surrogate objective satisfies:
    \[
    \Big| \bbP(\Y\in\sS\mid \x, \mathring{\a}) - \bbE_{\y\sim P\left(\cdot \mid \x, \mathring{\a}\right)}\left[w_\eta(\y)\right]\Big| = \sO\left( \frac{\sqrt{\ln \eta}}{\eta} \right).
\]
\end{theorem1}

\begin{proof}[Proof of Thm.~\ref{thm:surr}]
    Let the pointwise approximation error be denoted by \(\Delta(\y) = \left| \mathbb{I}(\y \in \sS) - w_\eta(\y) \right|\), where the indicator function can be written as the product of indicators for each linear constraint: \(\mathbb{I}(\y \in \sS) = \prod_{k=1}^l \mathbb{I}(b_k - \m_k^\T\y \ge 0)\). Similarly, the surrogate is the product of Probit functions \(w_\eta(\y) = \prod_{k=1}^l \Phi(\eta(b_k - \m_k^\T\y))\) as defined in Eq.~\eqref{eq:probit}.
    We aim to bound the probability gap. By invoking the triangle inequality for integrals, we decompose the gap into: (i) the boundary tube \(\partial_\epsilon \sS\) (points within distance \(\epsilon\) of boundary); and (ii) the complement region \(\sS^c_{\epsilon}\) (points at least \(\epsilon\) away from boundary).
    \begin{equation}
        \label{eq:gap}
        \begin{aligned}
            \text{Gap} & \triangleq  \left| \bbP(\Y\in\sS\mid \x, \mathring{\a}) - \bbE_{\y\sim P\left(\cdot \mid \x, \mathring{\a}\right)}\left[w_\eta(\y)\right]\right|\\
            & = \left| \int_{\bbR^{d_y}} \left( \mathbb{I}(\y \in \sS) - w_\eta(\y) \right) p(\y \mid \x, \mathring{\a}) \, d\y \right| \\
            & \leq \int_{\bbR^{d_y}} \Delta(\y) p(\y \mid \x, \mathring{\a}) \, d\y \\
            & = \int_{\partial_\epsilon \sS} \Delta(\y) p(\y \mid \x, \mathring{\a}) \, d\y + \int_{\sS^c_{\epsilon}} \Delta(\y) p(\y \mid \x, \mathring{\a}) \, d\y.
        \end{aligned}
    \end{equation}
    
    We analyze the two integral terms separately. Without loss of generality, we assume $\|\m_k\|_2 = 1$ for each \(\m_k^\T\y \leq b_k\) in \Eq~\eqref{eq:region}, as each constraint can be rescaled accordingly. 
    
    \textbf{Bound for the Boundary Region (\(\partial_\epsilon \sS\)).}
    The pointwise error can be bounded by \(\Delta(\y) \le 1\). Assuming the probability density \(p(\y \mid \x, \mathring{\a})\) is bounded by a constant \(M\), the probability mass of the boundary tube depends on the volume of \(\partial_\epsilon \sS\). For a convex polytope, the volume of the \(\epsilon\)-neighborhood of the boundary scales linearly with \(\epsilon\) for sufficiently small \(\epsilon\) \wrt the Euclidean metric. Thus, there exists a constant \(K\) dependent on the surface area of \(\sS\) and \(M\) such that:
    \begin{equation}
        \label{eq:boundary_bound}
        \int_{\partial_\epsilon \sS} \Delta(\y) p(\y \mid \x, \mathring{\a}) \, d\y \leq \int_{\partial_\epsilon \sS} M \, d\y \leq K \epsilon.
    \end{equation}
    
    \textbf{Bound for the Complement Region (\(\sS^c_{\epsilon}\)).}
    In this region, the distance from \(\y\) to any defining hyperplane is at least \(\epsilon\). Let \(h_k(\y) = b_k - \m_k^\T\y\) denote the signed margin for the \(k\)-th constraint. We distinguish two cases for \(\y \in \sS^c_{\epsilon}\) as follows:
    \begin{itemize}[leftmargin=12pt, itemsep=0.03in]
        \item \textit{Deep Inside (\(\y \in \sS\)):} In this case, \(\mathbb{I}(\y \in \sS) = 1\), and \(h_k(\y) \ge \epsilon\) for all \(k\). The exact term \(\mathbb{I}(\y \in \sS) = \prod_{k=1}^l \mathbb{I}(h_k(\y) \ge 0) = 1\), and the surrogate \(w_\eta(\y) = \prod_{k=1}^l \Phi(\eta h_k(\y))\). We apply Lemma~\ref{lemma:telescoping} setting \(u_k = 1\) and \(v_k = \Phi(\eta h_k(\y))\):
        \begin{equation*}
            \Delta(\y) = \left| 1 - \prod_{k=1}^l \Phi(\eta h_k(\y)) \right| \leq \sum_{k=1}^l \left| 1 - \Phi(\eta h_k(\y)) \right| = \sum_{k=1}^l \Phi(- \eta h_k(\y)).
        \end{equation*}
        Since \(h_k(\y) \ge \epsilon\) and \(\Phi(\cdot)\) is monotonic, \(\Phi(-\eta h_k(\y)) \le \Phi(-\eta \epsilon)\). Thus, \(\Delta(\y) \le l \cdot \Phi(-\eta \epsilon)\).
        
        \item \textit{Deep Outside (\(\y \notin \sS\)):} In this case, \(\mathbb{I}(\y \in \sS) = 0\). There exists at least one constraint \(j\) such that \(h_j(\y) \le -\epsilon\). The exact term \(\mathbb{I}(\y \in \sS) = \prod_{k=1}^l \mathbb{I}(h_k(\y) \ge 0) = 0\), and the surrgate \(w_\eta(\y) = \prod_{k=1}^l \Phi(\eta h_k(\y))\). Since \(0 < \Phi(\cdot) < 1\), the product is bounded by any single factor:
        \begin{equation*}
            \Delta(\y) = \prod_{k=1}^l \Phi(\eta h_k(\y)) \le \Phi(\eta h_j(\y)) \le \Phi(-\eta \epsilon).
        \end{equation*}
    \end{itemize}
    
    Combining both cases, the pointwise error in \(\sS^c_{\epsilon}\) is globally dominated by the term involving the Gaussian tail: \(\Delta(\y) \le l \cdot \Phi(-\eta \epsilon)\).
    Now, we apply Lemma~\ref{lemma:tail_gaussian} setting \(t = \eta \epsilon > 0\):
    \begin{equation*}
        \sup_{\y \in \sS^c_{\epsilon}} \Delta(\y) \leq l \cdot \Phi(-\eta \epsilon) \leq l \cdot \frac{1}{\sqrt{2\pi}\eta \epsilon} e^{-\frac{(\eta \epsilon)^2}{2}}.
    \end{equation*}
    The integral over the complement region is therefore bounded by:
    \begin{equation}
        \label{eq:inside_bound}
        \int_{\sS^c_{\epsilon}} \Delta(\y) p(\y \mid \x, \mathring{\a}) \, d\y \le \frac{l}{\sqrt{2\pi}\eta \epsilon} e^{-\frac{\eta^2 \epsilon^2}{2}}.
    \end{equation}
    
    \textbf{Total Bound.} Combining Eq.~\eqref{eq:boundary_bound} and Eq.~\eqref{eq:inside_bound}, the total gap in Eq.~\eqref{eq:gap} is bounded by:
    \begin{equation*}
        \text{Gap} \leq K \epsilon + \frac{C}{\eta \epsilon} e^{-\frac{\eta^2 \epsilon^2}{2}},
    \end{equation*}
    where \(C = l/\sqrt{2\pi}\). Choosing \(\epsilon = \frac{\sqrt{2\ln \eta}}{\eta}\), we have:
    \begin{itemize}[leftmargin=12pt, itemsep=0.03in]
        \item The boundary term becomes \(K \frac{\sqrt{2\ln \eta}}{\eta}\).
        \item The tail term becomes \(\frac{C}{\sqrt{2\ln \eta}} e^{-\ln \eta} = \frac{C}{\sqrt{2\ln \eta}} \frac{1}{\eta}\), which decays faster than the boundary term.
    \end{itemize}
    Consequently, the dominant term determines the convergence rate:
    \begin{equation*}
        \text{Gap} \leq \sO\left( \frac{\sqrt{\ln \eta}}{\eta} \right).
    \end{equation*}
\end{proof}

\subsection{Proof of Theorem~\ref{thm:nested_bound}}
\label{sec:theory_bound}

\begin{lemma}[\citeauthor{song2009hilbert}, \citeyear{song2009hilbert}, Theorem 6]
    \label{lemma:cme_pointwise}
    Let $\mathcal{V}$ and $\mathcal{Q}$ be the domains of the conditioning variable $v$ and the target variable $q$, respectively. Let $\mathcal{H}_{k_q}$ denote the RKHS associated with the target domain.
    Consider the empirical conditional mean embedding $\hat{\mu}_{q|v}$ estimated from $N$ samples using kernel ridge regression (KRR) with regularization parameter $\lambda$. 
    Assuming the kernel $k_v$ on $\mathcal{V}$ is bounded and the true embedding $\mu_{q|v}$ lies in the range of the covariance operator, for a fixed $v \in \mathcal{V}$, the estimation error satisfies the following bound in probability:
    \begin{equation*}
        \| \hat{\mu}_{q|v} - \mu_{q|v} \|_{\mathcal{H}_{k_q}} = \mathcal{O}_p\left( \frac{1}{\sqrt{N \lambda}} + \sqrt{\lambda} \right).
    \end{equation*}
\end{lemma}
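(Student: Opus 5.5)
The plan is the standard operator-theoretic analysis of kernel ridge regression, specialised to a fixed query point $v$. We cite this result from \citet{song2009hilbert} rather than prove it, so what follows is only a proposal for how one would establish it.

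\textbf{Setup.} Write $\phi(v)=k_v(v,\cdot)$ and let $C_{VV},C_{QV}$ be the (cross-)covariance operators, with empirical versions $\hat C_{VV},\hat C_{QV}$. The KRR estimator of Eq.~\eqref{eq:cme_estimator} can be rewritten as
\[
\hat\mu_{q|v}=\hat C_{QV}(\hat C_{VV}+\lambda I)^{-1}\phi(v).
\]
The assumption that the true embedding lies in the range of the covariance operator will be read as follows. There is a conditional embedding operator $C_{Q|V}$ with $\mu_{q|v}=C_{Q|V}\phi(v)$, and it admits the factorisation $C_{Q|V}=B\,C_{VV}^{1/2}$ for some bounded operator $B$.

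\textbf{Decomposition.} The key algebraic step is to add and subtract $C_{Q|V}\hat C_{VV}(\hat C_{VV}+\lambda I)^{-1}\phi(v)$. This gives
\[
\hat\mu_{q|v}-\mu_{q|v}=\Xi_N(\hat C_{VV}+\lambda I)^{-1}\phi(v)\;-\;\lambda\, C_{Q|V}(\hat C_{VV}+\lambda I)^{-1}\phi(v),
\]
where the noise operator is
\[
\Xi_N=\hat C_{QV}-C_{Q|V}\hat C_{VV}=\frac1N\sum_{i=1}^N\bigl(k_q(q_i,\cdot)-\mu_{q|v_i}\bigr)\otimes\phi(v_i).
\]
Each summand of $\Xi_N$ has conditional mean zero given $v_i$, because $\mathbb{E}[k_q(q_i,\cdot)\mid v_i]=\mu_{q|v_i}$. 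Each summand is also bounded, since both kernels are bounded.

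\textbf{Bias term.} Using the source condition, the second term is bounded by
\[
\lambda\|B\|\,\|C_{VV}^{1/2}(\hat C_{VV}+\lambda I)^{-1}\phi(v)\|.
\]
I will then replace $C_{VV}^{1/2}$ by $\hat C_{VV}^{1/2}$. The factor $\|C_{VV}^{1/2}(\hat C_{VV}+\lambda I)^{-1/2}\|$ is bounded by $\|(C_{VV}+\lambda I)^{1/2}(\hat C_{VV}+\lambda I)^{-1/2}\|$. A Hilbert-space Bernstein or Hoeffding bound, $\|\hat C_{VV}-C_{VV}\|_{HS}=\mathcal O_p(N^{-1/2})$, shows this ratio is $\mathcal O_p(1)$ once $N\lambda\to\infty$. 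The elementary spectral bound $\lambda\|(\hat C_{VV}+\lambda I)^{-1/2}\|\le\sqrt\lambda$ then yields the $\sqrt\lambda$ rate.

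\textbf{Variance term.} This is the main obstacle. The naive bound $\|(\hat C_{VV}+\lambda I)^{-1}\|\le 1/\lambda$ together with $\|\Xi_N\|=\mathcal O_p(N^{-1/2})$ only gives $1/(\sqrt N\lambda)$. To reach $1/\sqrt{N\lambda}$ I would split the resolvent into two half-powers:
\[
\|\Xi_N(\hat C_{VV}+\lambda I)^{-1}\phi(v)\|\le\|\Xi_N(\hat C_{VV}+\lambda I)^{-1/2}\|\cdot\|(\hat C_{VV}+\lambda I)^{-1/2}\phi(v)\|.
\]
The second factor is at most $\lambda^{-1/2}\sup k_v^{1/2}$. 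For the first factor I would again switch $\hat C_{VV}$ to $C_{VV}$ at an $\mathcal O_p(1)$ cost, using the same operator-ratio argument as in the bias step. I would then bound $\mathbb{E}\|\Xi_N(C_{VV}+\lambda I)^{-1/2}\|_{HS}^2$ directly. The summands are independent and have conditional mean zero, so this second moment equals
\[
\frac1N\,\mathbb{E}\|k_q(q,\cdot)-\mu_{q|v}\|^2\,\|(C_{VV}+\lambda I)^{-1/2}\phi(v)\|^2 .
\]
If only boundedness is used, this is $\mathcal O(1/(N\lambda))$, which would make the whole variance term $\mathcal O_p(1/(N\lambda))$, even better than needed. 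The weaker claimed rate $1/\sqrt{N\lambda}$ also follows if one instead uses the cruder bound $\|\Xi_N(C_{VV}+\lambda I)^{-1/2}\|=\mathcal O_p(N^{-1/2})$ obtained from $\|(C_{VV}+\lambda I)^{-1/2}\|\le\lambda^{-1/2}$ applied differently. Either way, Chebyshev's inequality converts the moment bound into an $\mathcal O_p$ statement.

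\textbf{Conclusion.} Combining the two terms gives
\[
\|\hat\mu_{q|v}-\mu_{q|v}\|_{\mathcal H_{k_q}}=\mathcal O_p\bigl((N\lambda)^{-1/2}+\lambda^{1/2}\bigr),
\]
which is the claimed bound.
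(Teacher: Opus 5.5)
The paper gives no proof of this lemma; it is quoted directly from \citet{song2009hilbert}. Your sketch therefore has to stand on its own, and its key step, the variance term, does not deliver the claimed rate. Write $\kappa^2=\sup k_v$ and $\kappa_q^2=\sup k_q$.

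The second moment you compute, $\mathbb{E}\|\Xi_N(C_{VV}+\lambda I)^{-1/2}\|_{HS}^2=\mathcal{O}(1/(N\lambda))$, gives $\|\Xi_N(C_{VV}+\lambda I)^{-1/2}\|=\mathcal{O}_p((N\lambda)^{-1/2})$. That is a square root, not $1/(N\lambda)$. Multiplying by your second factor $\|(\hat C_{VV}+\lambda I)^{-1/2}\phi(v)\|\le\kappa\lambda^{-1/2}$ gives $\mathcal{O}_p(N^{-1/2}\lambda^{-1})$, which is exactly the naive rate you set out to beat. Splitting the resolvent into half-powers gains nothing if each half is then bounded in worst-case operator norm.

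Your fallback also fails. Using $\|(C_{VV}+\lambda I)^{-1/2}\|\le\lambda^{-1/2}$ bounds the whitened noise operator by $(N\lambda)^{-1/2}$, not $N^{-1/2}$. An $N^{-1/2}$ bound would require the effective dimension $\mathrm{tr}(C_{VV}(C_{VV}+\lambda I)^{-1})$ to stay bounded as $\lambda\to 0$, which essentially means $C_{VV}$ has finite rank.

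There is a smaller issue in the bias step too. $\|\hat C_{VV}-C_{VV}\|_{HS}=\mathcal{O}_p(N^{-1/2})$ makes the ratio $\|(C_{VV}+\lambda I)^{1/2}(\hat C_{VV}+\lambda I)^{-1/2}\|$ equal to $\mathcal{O}_p(1)$ only when $N\lambda^2\to\infty$, not merely $N\lambda\to\infty$.

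The missing idea is to use that $v$ is fixed and to condition on the design $v_1,\dots,v_N$. Set $\varepsilon_i=k_q(q_i,\cdot)-\mu_{q|v_i}$. The variance term equals $\sum_i\beta_i(v)\varepsilon_i$ with $\beta(v)=(\mathbf{K}_{vv}+N\lambda I)^{-1}\mathbf{k}_v(v)$, and $\beta(v)$ depends only on the design. Given the design, the $\varepsilon_i$ are independent and centred, so
\[
\mathbb{E}\Bigl[\Bigl\|\sum_{i}\beta_i(v)\varepsilon_i\Bigr\|^2\Bigm| v_{1:N}\Bigr]\le\kappa_q^2\|\beta(v)\|^2\le\frac{\kappa_q^2}{N\lambda}\,\mathbf{k}_v(v)^\top(\mathbf{K}_{vv}+N\lambda I)^{-1}\mathbf{k}_v(v)\le\frac{\kappa_q^2\kappa^2}{N\lambda}.
\]
Chebyshev's inequality then gives $\mathcal{O}_p((N\lambda)^{-1/2})$ with no operator concentration and no condition on $\lambda$.

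In operator language, the gain comes from measuring $\hat g=(\hat C_{VV}+\lambda I)^{-1}\phi(v)$ in the empirical $L^2$ norm, $\|\hat C_{VV}^{1/2}\hat g\|\le\|(\hat C_{VV}+\lambda I)^{-1/2}\phi(v)\|\le\kappa\lambda^{-1/2}$, rather than in its RKHS norm, which can be as large as $\kappa/\lambda$. This is exactly where the fixed-$v$ rate beats the uniform rate in Lemma~\ref{lemma:cme_uniform}.

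For the bias, use the resolvent identity
\[
(\hat C_{VV}+\lambda I)^{-1}=(C_{VV}+\lambda I)^{-1}+(C_{VV}+\lambda I)^{-1}(C_{VV}-\hat C_{VV})(\hat C_{VV}+\lambda I)^{-1}.
\]
Together with $\lambda\|C_{VV}^{1/2}(C_{VV}+\lambda I)^{-1}\|\le\sqrt{\lambda}/2$, this bounds $\lambda\|C_{VV}^{1/2}(\hat C_{VV}+\lambda I)^{-1}\phi(v)\|$ by $\frac{\kappa\sqrt{\lambda}}{2}+\frac{\kappa}{2\sqrt{\lambda}}\|C_{VV}-\hat C_{VV}\|=\mathcal{O}_p(\sqrt{\lambda}+(N\lambda)^{-1/2})$. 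This avoids the operator-ratio argument entirely.
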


\begin{lemma}[\citeauthor{grunewalder2012modelling}, \citeyear{grunewalder2012modelling}, Lemma 2.1]
    \label{lemma:cme_uniform}
    Under standard regularity assumptions on the marginal distribution and kernels defined on $\mathcal{V}$ and $\mathcal{Q}$, the empirical conditional mean embedding converges uniformly over the conditioning domain $\mathcal{V}$. 
    Specifically, with regularization parameter $\lambda$, the following uniform bound holds in probability:
    \begin{equation*}
        \sup_{v \in \mathcal{V}} \| \hat{\mu}_{q|v} - \mu_{q|v} \|_{\mathcal{H}_{k_q}} = \mathcal{O}_p\left( \sqrt{\lambda} + \frac{1}{\sqrt{N \lambda^3}} \right).
    \end{equation*}
\end{lemma}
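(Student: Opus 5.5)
The plan is to work at the level of operators, so that uniformity over $v\in\mathcal{V}$ comes for free from the boundedness of $k_v$. Let $C_{VV}$ and $C_{QV}$ be the (uncentered) covariance and cross-covariance operators, and $\hat C_{VV},\hat C_{QV}$ their empirical versions from the $N$ samples. The KRR estimator of Eq.~\eqref{eq:cme_estimator} can be written as $\hat\mu_{q|v}=\hat U k_v(v,\cdot)$ with $\hat U=\hat C_{QV}(\hat C_{VV}+\lambda I)^{-1}$, which is the operator form of $(\mathbf{K}+N\lambda\mathbf{I})^{-1}\mathbf{k}(v)$. The population embedding is $\mu_{q|v}=U k_v(v,\cdot)$, where $U=C_{QV}C_{VV}^{-1}$ is well defined under the range/source condition (as in Lemma~\ref{lemma:cme_pointwise}). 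Introduce the regularized population operator $U_\lambda=C_{QV}(C_{VV}+\lambda I)^{-1}$. Since $\sup_v\|k_v(v,\cdot)\|\le\sqrt{\kappa}$ for the bounded kernel, we get
\[
\sup_{v}\|\hat\mu_{q|v}-\mu_{q|v}\|\le\sqrt{\kappa}\,\bigl(\|\hat U-U_\lambda\|_{op}+\|U_\lambda-U\|_{op}\bigr).
\]
It therefore suffices to bound the variance term $\|\hat U-U_\lambda\|_{op}$ and the bias term $\|U_\lambda-U\|_{op}$ separately.

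\textbf{Bias.} I would use the source condition in the form $U=G\,C_{VV}^{1/2}$ with $G$ bounded; this is what ``standard regularity'' should supply. Then
\[
U_\lambda-U=-\lambda\,U(C_{VV}+\lambda I)^{-1}=-\lambda\,G\,C_{VV}^{1/2}(C_{VV}+\lambda I)^{-1}.
\]
Spectral calculus gives $\|C_{VV}^{1/2}(C_{VV}+\lambda I)^{-1}\|\le \tfrac12\lambda^{-1/2}$, so $\|U_\lambda-U\|=O(\sqrt\lambda)$.

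\textbf{Variance.} Split
\[
\hat U-U_\lambda=(\hat C_{QV}-C_{QV})(\hat C_{VV}+\lambda I)^{-1}+C_{QV}(C_{VV}+\lambda I)^{-1}(C_{VV}-\hat C_{VV})(\hat C_{VV}+\lambda I)^{-1},
\]
using the resolvent identity for the second piece. A Hoeffding inequality for bounded Hilbert–Schmidt-valued i.i.d.\ averages gives $\|\hat C_{QV}-C_{QV}\|_{HS}$ and $\|\hat C_{VV}-C_{VV}\|_{HS}$ both $O_p(N^{-1/2})$. In addition, $\|(\hat C_{VV}+\lambda I)^{-1}\|\le\lambda^{-1}$ deterministically. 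The first piece is therefore $O_p(N^{-1/2}\lambda^{-1})$.

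The second piece is where I expect the main obstacle: the crude bound $\|C_{QV}\|\lambda^{-1}$ on $\|C_{QV}(C_{VV}+\lambda I)^{-1}\|$ only yields $N^{-1/2}\lambda^{-2}$. To do better, I would use Baker's factorization $C_{QV}=C_{QQ}^{1/2}W C_{VV}^{1/2}$ with $\|W\|\le1$, which gives
\[
\|C_{QV}(C_{VV}+\lambda I)^{-1}\|\le\|C_{QQ}\|^{1/2}\,\|C_{VV}^{1/2}(C_{VV}+\lambda I)^{-1}\|=O(\lambda^{-1/2}).
\]
With this, the second piece is $O_p(\lambda^{-1/2}\cdot N^{-1/2}\cdot\lambda^{-1})=O_p(1/\sqrt{N\lambda^3})$, which dominates the first piece for $\lambda\le1$.

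\textbf{Combining.} Adding the bias and variance bounds yields
\[
\sup_v\|\hat\mu_{q|v}-\mu_{q|v}\|_{\mathcal H_{k_q}}=\mathcal O_p\bigl(\sqrt\lambda+1/\sqrt{N\lambda^3}\bigr),
\]
and every step holds uniformly in $v$ because the only $v$-dependence enters through $\|k_v(v,\cdot)\|\le\sqrt\kappa$.
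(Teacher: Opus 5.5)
The paper never proves this lemma. It is imported from Gr\"unewalder et al.\ (2012), with the hypotheses left as ``standard regularity assumptions.'' Your argument is a correct, self-contained derivation of the stated rate. It follows the natural operator-level route for such uniform bounds: control $\|\hat U-U\|_{op}$ once, then transfer it to every $v$ through $\sup_v\|k_v(v,\cdot)\|\le\sqrt{\kappa}$.

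What your version adds over the bare citation is that the assumptions become explicit:
\begin{itemize}
\item bounded $k_v$ and $k_q$;
\item a bounded operator $U$ with $\mu_{q|v}=Uk_v(v,\cdot)$ for $P_V$-almost every $v$, so the supremum should be read over the support;
\item the source condition $U=GC_{VV}^{1/2}$ with $G$ bounded.
\end{itemize}
You should state one identity you use silently. Your bias formula $U_\lambda-U=-\lambda U(C_{VV}+\lambda I)^{-1}$ requires $C_{QV}=UC_{VV}$. This follows from the tower property: $C_{QV}f=\mathbb{E}\bigl[\mathbb{E}[k_q(Q,\cdot)\mid V]\,f(V)\bigr]=\mathbb{E}\bigl[Uk_v(V,\cdot)f(V)\bigr]=UC_{VV}f$.

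That same identity removes what you flag as the main obstacle. Since $\|C_{QV}(C_{VV}+\lambda I)^{-1}\|=\|UC_{VV}(C_{VV}+\lambda I)^{-1}\|\le\|U\|$, the second variance piece is already $O_p(N^{-1/2}\lambda^{-1})$. Baker's factorization is valid, because the joint uncentered second-moment operator is positive and so its off-diagonal block factors through a contraction. But it is unnecessary, and it is what costs you the extra $\lambda^{-1/2}$.

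You therefore actually obtain $\mathcal{O}_p\bigl(\sqrt{\lambda}+1/(\lambda\sqrt{N})\bigr)$. This implies the stated bound for every $\lambda>0$: for $\lambda\le 1$ by comparing powers, and for $\lambda>1$ because $N^{-1/2}\lambda^{-1}\le\sqrt{\lambda}$. It would also weaken the requirement $N\lambda_h^3\to\infty$, which Term~II imposes in the proof of Thm.~\ref{thm:nested_bound}, to $N\lambda_h^2\to\infty$.
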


Based on these lemmas, we establish the consistency and finite-sample error bound for our nested estimator.

% \begin{theorem}[Finite-Sample Error Bound]
%     \label{thm:nested_bound}
%     Let $\lambda_h$ and $\lambda_x$ denote the regularization parameters for the inner (Stage 1) and outer (Stage 2) KRR estimators, respectively. Assume that the smooth surrogate $w_\eta$ is bounded in RKHS norm, i.e., $\|w_\eta\|_{\mathcal{H}_{k_y}} \leq C_w$. 
%     For any query context $\mathbf{x}$ and alteration $\mathbf{a}$, the estimation error satisfies:
%     \begin{equation*}
%         \left| \hat{\mathbb{E}}_{\mathbf{u}\sim P(\cdot \mid \mathbf{x})}\left[\hat{\mathbb{E}}_{\mathbf{y}\sim P(\cdot\mid \mathbf{h})}\!\left[ w_\eta(\mathbf{Y}) \right] \right] - \mathbb{E}_{\mathbf{u}\sim P(\cdot \mid \mathbf{x})}\left[\mathbb{E}_{\mathbf{y}\sim P(\cdot\mid \mathbf{h})}\!\left[ w_\eta(\mathbf{Y}) \right] \right] \right| 
%         = \mathcal{O}_p\left( \sqrt{\lambda_x} + \frac{1}{\sqrt{N \lambda_x}} + \sqrt{\lambda_h} + \frac{1}{\sqrt{N \lambda_h^3}} \right).
%     \end{equation*}
% \end{theorem}
\begin{theorem2}
    Let $N$ be the sample size of the observational training data, and let $\lambda_h, \lambda_x > 0$ denote the regularization parameters for the inner (Stage 1) and outer (Stage 2) KRR estimators. 
    Suppose that $w_\eta$ is bounded in the RKHS norm, i.e., $\|w_\eta\|_{\mathcal{H}_{k_y}} \leq C_w$. 
    For any context $\mathbf{x}$ and alteration $\mathbf{a}$, the estimation error $\Delta_{\text{est}} \triangleq \left| \hat{J}_\eta(\mathbf{a}; \mathbf{x}) - J_\eta(\mathbf{a}; \mathbf{x}) \right|$ satisfies:
    \begin{equation*}
        \Delta_{\text{est}} = \mathcal{O}_p\left( \sqrt{\lambda_x} + \sqrt{\lambda_h} + \frac{1}{\sqrt{N \lambda_x}} + \frac{1}{\sqrt{N \lambda_h^3}} \right).
    \end{equation*}
\end{theorem2}

\begin{proof}[Proof of Thm.~\ref{thm:nested_bound}]
    Let $\psi(\mathbf{h}) \triangleq \mathbb{E}_{\mathbf{y}\sim P(\cdot\mid \mathbf{h})}[ w_\eta(\mathbf{Y}) ]$ denote the true inner expectation function, and $\hat{\psi}(\mathbf{h}) \triangleq \hat{\mathbb{E}}_{\mathbf{y}\sim P(\cdot\mid \mathbf{h})}[ w_\eta(\mathbf{Y}) ]$ denote its empirical estimator obtained from the first-stage KRR. Note that for fixed context $\mathbf{x}$ and alteration $\mathbf{a}$, both can be viewed as functions of $\mathbf{u}$.
    We aim to bound the absolute error denoted by $\Delta$:
    \begin{equation*}
        \Delta = \left| \hat{\mathbb{E}}_{\mathbf{u}\sim P(\cdot \mid \mathbf{x})}[\hat{\psi}] - \mathbb{E}_{\mathbf{u}\sim P(\cdot \mid \mathbf{x})}[\psi] \right|.
    \end{equation*}
    To analyze the convergence, we introduce an intermediate term $\mathbb{E}_{\mathbf{u}\sim P(\cdot \mid \mathbf{x})}[\hat{\psi}(\mathbf{x}, \mathbf{u}, \mathbf{a})]$, representing the expected value of the estimated inner function under the true outer distribution. By the triangle inequality, we decompose the total error into:
    \begin{equation*}
        \begin{aligned}
            \Delta &= \left| \hat{\mathbb{E}}_{\mathbf{u}\sim P(\cdot \mid \mathbf{x})}[\hat{\psi}] - \mathbb{E}_{\mathbf{u}\sim P(\cdot \mid \mathbf{x})}[\hat{\psi}] + \mathbb{E}_{\mathbf{u}\sim P(\cdot \mid \mathbf{x})}[\hat{\psi}] - \mathbb{E}_{\mathbf{u}\sim P(\cdot \mid \mathbf{x})}[\psi] \right| \\
            &\leq \underbrace{\left| \hat{\mathbb{E}}_{\mathbf{u}\sim P(\cdot \mid \mathbf{x})}[\hat{\psi}] - \mathbb{E}_{\mathbf{u}\sim P(\cdot \mid \mathbf{x})}[\hat{\psi}] \right|}_{\text{Term I: Outer Estimation Error}} + \underbrace{\left| \mathbb{E}_{\mathbf{u}\sim P(\cdot \mid \mathbf{x})}[\hat{\psi} - \psi] \right|}_{\text{Term II: Inner Estimation Error}}.
        \end{aligned}
    \end{equation*}

    \textbf{Analysis of Term I (Outer Estimation Error):}
    Term I captures the error arising from estimating the expectation of the fixed function $\hat{\psi}$ (conditioned on Stage 1 data) using the finite sample estimate of the outer distribution.
    Recall that our estimator is derived as $\hat{\mathbb{E}}_{\mathbf{u}\sim P(\cdot \mid \mathbf{x})}[\hat{\psi}] = \langle \hat{\psi}, \hat{\mu}_{\mathbf{U} \mid \mathbf{x}} \rangle_{\mathcal{H}_{k_u}}$, where $\hat{\mu}_{\mathbf{U} \mid \mathbf{x}}$ is the empirical CME of $\mathbf{u}$ given $\mathbf{x}$. Similarly, the intermediate term can be written as $\mathbb{E}_{\mathbf{u}\sim P(\cdot \mid \mathbf{x})}[\hat{\psi}] = \langle \hat{\psi}, \mu_{\mathbf{U} \mid \mathbf{x}} \rangle_{\mathcal{H}_{k_u}}$.
    Applying the Cauchy-Schwarz inequality:
    \begin{equation*}
        \begin{aligned}
            \text{Term I} &= \left| \langle \hat{\psi}, \hat{\mu}_{\mathbf{U} \mid \mathbf{x}} - \mu_{\mathbf{U} \mid \mathbf{x}} \rangle_{\mathcal{H}_{k_u}} \right| \\
            &\leq \| \hat{\psi} \|_{\mathcal{H}_{k_u}} \| \hat{\mu}_{\mathbf{U} \mid \mathbf{x}} - \mu_{\mathbf{U} \mid \mathbf{x}} \|_{\mathcal{H}_{k_u}}.
        \end{aligned}
    \end{equation*}
    The estimator $\hat{\psi}$ is obtained via KRR in the joint RKHS $\mathcal{H}_{\mathcal{H}}$. 
Under the decomposable kernel assumption, for any fixed $\mathbf{x}$ and $\mathbf{a}$, the section $\hat{\psi}(\mathbf{x}, \cdot, \mathbf{a})$ explicitly resides in $\mathcal{H}_{k_u}$. 
Moreover, its norm $\| \hat{\psi} \|_{\mathcal{H}_{k_u}}$ is bounded, which is a direct consequence of the boundedness of the regularized estimator in the joint space $\mathcal{H}_{\mathcal{H}}$.
    For the CME estimation error $\| \hat{\mu}_{\mathbf{U} \mid \mathbf{x}} - \mu_{\mathbf{U} \mid \mathbf{x}} \|_{\mathcal{H}_{k_u}}$, we invoke Lemma~\ref{lemma:cme_pointwise}. With the outer regularization parameter $\lambda_x$, we have:
    \begin{equation*}
        \text{Term I} = \mathcal{O}_p\left( \frac{1}{\sqrt{N \lambda_x}} + \sqrt{\lambda_x} \right).
    \end{equation*}

    \textbf{Analysis of Term II (Inner Estimation Error):}
    Term II reflects the propagated error from the inner expectation estimation. 
    % Since the outer expectation integrates over the entire domain of $\mathbf{u}$, we require a uniform bound on the estimation error of $\psi(\cdot)$.
    \begin{equation*}
        \begin{aligned}
            \text{Term II} &= \left| \int_\sU (\hat{\psi}(\mathbf{h}) - \psi(\mathbf{h})) p(\mathbf{u} \mid \mathbf{x}) d\mathbf{u} \right| \\
            &\leq \sup_{\mathbf{u} \in \mathcal{U}} \left| \hat{\psi}(\mathbf{x}, \mathbf{u}, \mathbf{a}) - \psi(\mathbf{x}, \mathbf{u}, \mathbf{a}) \right|.
        \end{aligned}
    \end{equation*}
    According to reproducing property, $\psi(\mathbf{h}) = \langle w_\eta, \mu_{\mathbf{Y} \mid \mathbf{h}} \rangle_{\mathcal{H}_{k_y}}$ and $\hat{\psi}(\mathbf{h}) = \langle w_\eta, \hat{\mu}_{\mathbf{Y} \mid \mathbf{h}} \rangle_{\mathcal{H}_{k_y}}$. By Cauchy-Schwarz inequality:
    \begin{equation*}
        \left| \hat{\psi}(\mathbf{h}) - \psi(\mathbf{h}) \right| \leq \| w_\eta \|_{\mathcal{H}_{k_y}} \| \hat{\mu}_{\mathbf{Y} \mid \mathbf{h}} - \mu_{\mathbf{Y} \mid \mathbf{h}} \|_{\mathcal{H}_{k_y}}.
    \end{equation*}
    To bound this uniformly for all $\mathbf{h}$, we rely on the uniform consistency result from Lemma~\ref{lemma:cme_uniform}. With the inner regularization parameter $\lambda_h$, we obtain:
    \begin{equation*}
        \sup_{\mathbf{h}} \| \hat{\mu}_{\mathbf{Y} \mid \mathbf{h}} - \mu_{\mathbf{Y} \mid \mathbf{h}} \|_{\mathcal{H}_{k_y}} = \mathcal{O}_p\left( \sqrt{\lambda_h} + \frac{1}{\sqrt{N \lambda_h^3}} \right).
    \end{equation*}
    Thus, Term II is bounded by this uniform rate.
    Combining the bounds for Term I and Term II, we obtain the total bound:
    \begin{equation*}
        \Delta \leq \text{Term I} + \text{Term II} = \mathcal{O}_p\left( \sqrt{\lambda_x} + \sqrt{\lambda_h} + \frac{1}{\sqrt{N \lambda_x}} + \frac{1}{\sqrt{N \lambda_h^3}} \right).
    \end{equation*}
\end{proof}

%% file: appendix/experimental_details.tex
\section{Experimental details and additional results}
\label{app:experiment}
In this section, we provide details for the experiments in Sec.~\ref{sec:experiment}. 
Hyperparameters like bandwidths are selected empirically based on heuristics in the kernel literature~\citep{gretton12a}, and are available in the provided supplementary code.

\vspace{-0.1in}
\subsection{Linear data}
\paragraph{Bermuda data.} The Bermuda data is an environmental dataset that records a collection of marine and biogeochemical variables measured in the Bermuda region~\citep{courtney2017environmental}. 
The variables involved in the generation process include:
\vspace{-0.1in}
\begin{itemize}[leftmargin=24pt, itemsep=0.03in]
  \item Light: Light levels at the ocean bottom; 
  \item Temp: Temperature at the ocean bottom; 
  \item Sal: Sea surface salinity; 
  \item DIC: Dissolved inorganic carbon in seawater; 
  \item TA: Total alkalinity of seawater; 
  \item $\Omega_A$: Aragonite saturation state of seawater; 
  \item Chla: Chlorophyll-\textit{a} concentration at the sea surface; 
  \item Nut: The first principal component (PC1) of NH$_4$, NO$_2$ + NO$_3$, and SiO$_4$;
  \item pHsw: Seawater pH; 
  \item CO$_2$: Partial pressure of CO$_2$ in seawater ($P_{\text{CO}_2}$); 
  \item NEC: Net ecosystem calcification.
\end{itemize}
\vspace{-0.1in}
The rehearsal graph over these variables is illustrated in \Fig~\ref{fig:bermuda}. 
Following~\citet{aglietti2020causal,qint2023srm}, the actionable variables that can be altered by the decision-maker are DIC, TA, $\Omega_A$, Chla, and Nut, which can be altered into values with constraint \([-1.0, 1.0]\). The desired region of is $\sS=\{\text{NEC}\in[0.5, 2]\}$, following the specifications in \Sec~\ref{sec:experiment}.

\begin{figure}[h]
    \centering
    \includegraphics[width=0.8\linewidth]{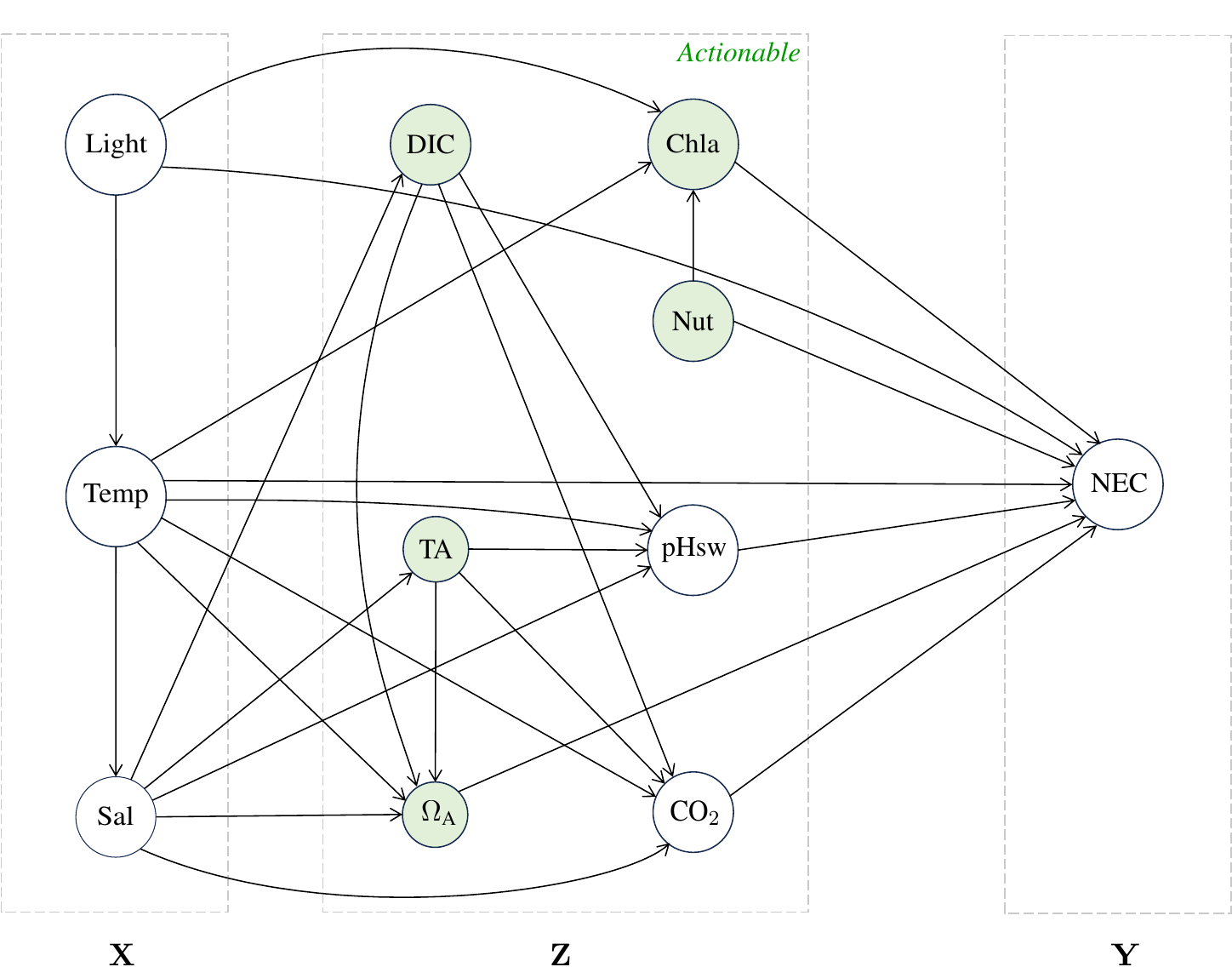}
    \caption{The graph structure of Bermuda data.}
    \label{fig:bermuda}
\end{figure}

\paragraph{Linear synthetic data.} 
The rehearsal graph over the variables in our linear synthetic dataset is illustrated in \Fig~\ref{fig:lin-syn}.
The desired region of \(\Y\) is specified as
\(\{(Y_1, Y_2) \mid 0.0 \leq Y_1 \leq 2.0,\; 0.0 \leq Y_2 \leq 2.0\}\), and the actionable variables are $A_1$ and $A_2$ with \(A_1, A_2\in[-3.0, 3.0]\).
The generation mechanisms underlying the natural observational process are given by:
% \vspace{-0.05in}

\[
\hspace{-3in}\left\{
\begin{aligned}
    &\quad  X_1 := \sN(0, 0.1); \\
    &\quad  X_2 := \sN(0, 0.1); \\
    &\quad  U_2 := 10.0 \times X_2 + \sN(0, 0.1); \\
    &\quad  A_1 := 10.0 \times X_1 + \sN(0, 0.1); \\
    &\quad  U_1 := 0.5 \times A_1 + 1.3 \times U_2 + \sN(0, 0.1); \\
    &\quad  A_2 := 2.0 \times A_1 + 0.4 \times U_2 + \sN(0, 0.1); \\
    &\quad  Y_1 := -1.0 \times A_1 + 0.9 \times A_2 + \sN(0, 0.1); \\
    &\quad  Y_2 := 1.6 \times A_1 - 0.9 \times A_2 + \sN(0, 0.1).
\end{aligned}
\right.
\]

\begin{figure}[h]
    \centering
    % \vspace{-0.1in}
    \includegraphics[width=0.8\linewidth]{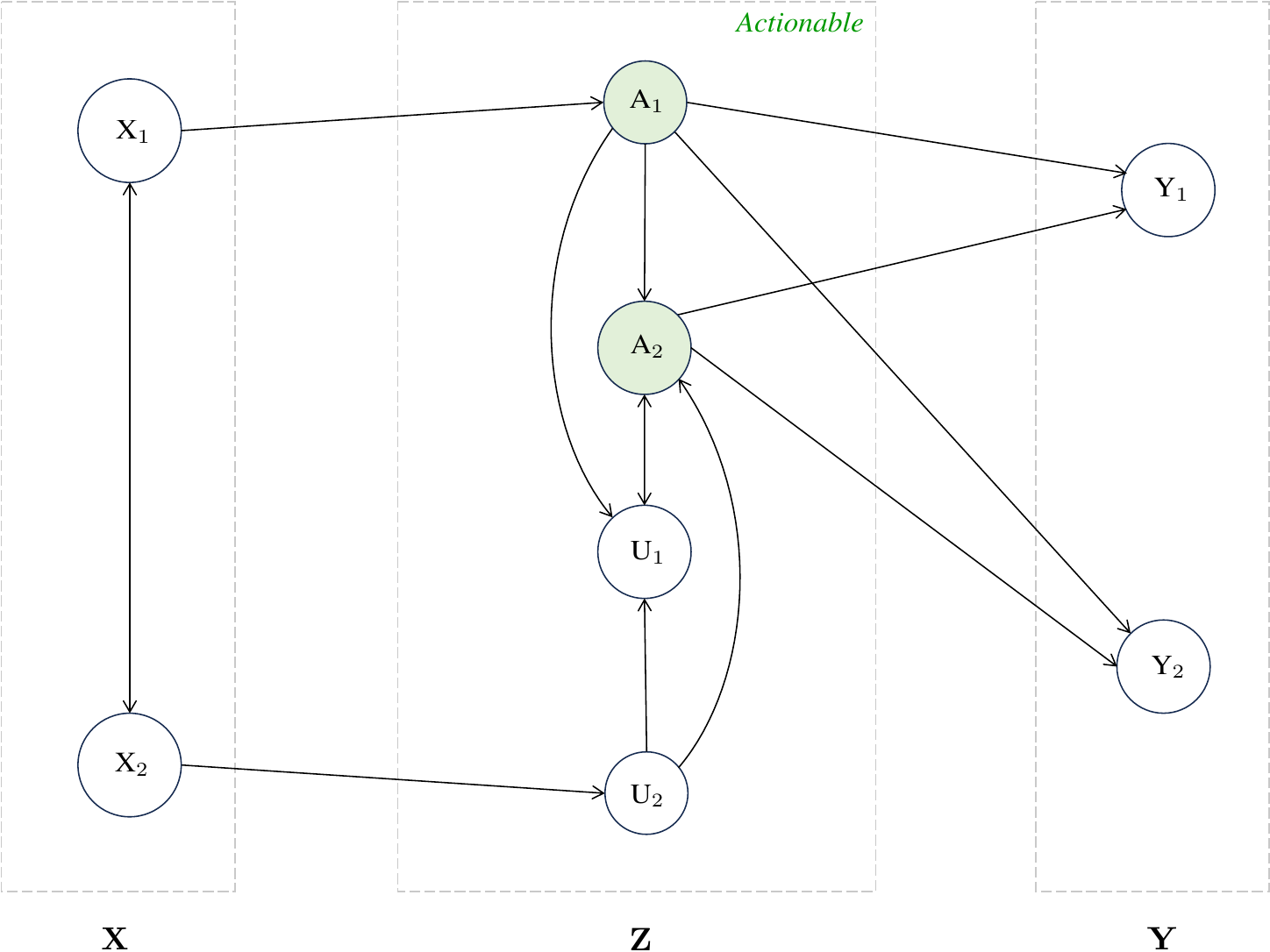}
    \caption{The graph structure of linear synthetic data.}
    \label{fig:lin-syn}
\end{figure}
\vspace{-0.2in}

\subsection{Nonlinear data}
\paragraph{Bank data.} 
The rehearsal graph over the variables in our bank dataset is illustrated in \Fig~\ref{fig:auf_example}. 
The actionable variables are formally defined as $A_1$ and $A_2$ (note that $A_1$ is excluded in our implementation to facilitate the visualization in \Fig~\ref{fig:auf_bank}).
The desired region of \(\Y\) is $\sS = \{ (Y_1, Y_2) \mid Y_1 \geq 0.6, Y_2 \geq 0.3 \}$, following the specifications in \Sec~\ref{sec:experiment}.
The variables involved in the decision process, along with the generation mechanisms underlying the natural observational process, are given by:
\vspace{-0.1in}
\begin{itemize}[leftmargin=24pt, itemsep=0.03in]
    \item $X_1$: Credit Score ($X_1 := \mathcal{U}(0, 1)$);
    \item $X_2$: Debt-to-Income Ratio ($X_2 := \mathcal{U}(0, 1)$);
    \item $U_1$: Borrower's Industry Sector Stability Index ($U_1 \sim \operatorname{Beta}(2, 2)$);
    \item $A_1$: Loan Amount Granted (Excluded in implementation);
    \item $A_2$: Interest Rate Applied ($A_2 := U_1 + 0.5 X_1 + 0.5 X_2 - 0.5 + \mathcal{N}(0, 0.2)$) with \(A_2\in[0.0, 1.0]\);
    \item $Y_1$: Repayment Rate ($Y_1 := \operatorname{Sigmoid}(2.0\times U_1^{1.1} - 1.5\times A_2 + 0.2\times X_2 + 0.4) + \mathcal{N}(0, 0.05)$);
    \item $Y_2$: Bank ROI (Return on Investment) ($Y_2 := 0.8 A_2 + 0.5 U_1 + \mathcal{N}(0, 0.05)$).
\end{itemize}

\paragraph{Nonlinear synthetic data 1.} The rehearsal graph over the variables in our nonlinear synthetic dataset 1 can be described as: \(X\rightarrow A_1, X\rightarrow Y\); \(U\rightarrow A_1, U\rightarrow A_2, U\rightarrow Y\); \(A_1\rightarrow Y\); \(A_2\rightarrow Y\). Desired region of \(\Y\) is specified as
\(\{Y \mid 1.5 \leq Y \leq 2.0\}\), and actionable variables are $A_1$ and $A_2$ with \(A_1, A_2\in[-1.0, 1.0]\). Generation mechanisms underlying observational process are given by:

\[
\hspace{-1in}\left\{
\begin{aligned}
    &\quad  X := \mathcal{U}(-1, 1); \\
    &\quad  U := \operatorname{Exp}(1); \\
    &\quad  A_1 := 0.8 X + 0.2 U + \sN(0, 0.5); \\
    &\quad  A_2 := 0.5 \sin(U) + \sN(0, 0.5); \\
    &\quad  Y := 1.5 - (A_1 - X)^2 - (A_2 - \ln(U + 1))^2 + 0.2 \sin(A_1 A_2) + \sN(0, 0.1).
\end{aligned}
\right.
\]
\paragraph{Nonlinear synthetic data 2.} 
The rehearsal graph over the variables in our nonlinear synthetic dataset 2 is illustrated in \Fig~\ref{fig:nonlin-syn2}.
The desired region of \(\Y\) is specified as
\(\{Y_1 \mid 0.9 \leq Y_1 \leq 1.5\}\), and the actionable variables are $A_1$ and $A_2$ with \(A_1, A_2\in[-1.0, 1.0]\). 
The generation mechanisms underlying the natural observational process are given by:
\[
\left\{
\begin{aligned}
    &\quad  X_1 := \mathcal{U}(0, 1); \\
    &\quad  X_2 := \mathcal{U}(0, 1); \\
    &\quad  A_1 := 1.0 X_1 - 1.0 X_2 + 0.2 X_1^2 - 0.2 X_2^2 + 0.1 X_1 X_2 + \sN(0, 0.1); \\
    &\quad  U_1 := -1.0 X_1 + 2.0 X_2 + 3.0 A_1 - 0.2 X_1^2 + 0.4 X_2^2 + 0.6 A_1^2 + 0.1 (X_1 X_2 + X_1 A_1 \\
    &\quad\quad\quad + X_2 A_1) + \sN(0, 0.1); \\
    &\quad  U_2 := -1.0 X_1 + 4.0 U_1 - 0.2 X_1^2 + 0.8 U_1^2 + 0.1 X_1 U_1 + \sN(0, 0.1); \\
    &\quad  A_2 := -1.0 X_1 - 0.5 X_2 + 0.3 U_1 - 0.2 X_1^2 - 0.1 X_2^2 + 0.06 U_1^2 + 0.1 (X_1 X_2 + X_1 U_1 \\
    &\quad\quad\quad + X_2 U_1) + \sN(0, 0.1); \\
    &\quad  Y_1 := 0.5 + \frac{1}{60} \Big( 0.2 (X_1 + X_1^2) - 5.0 (U_1 + U_1^2) - 1.0 (U_2 + U_2^2) + 5.0 (A_2 + A_2^2) \Big) \\
    &\quad\quad\quad + 0.5 \sum_{p_i, p_j \in \mathbf{Pa}(Y_1), i<j} p_i p_j + \sN(0, 0.1).
\end{aligned}
\right.
\]

\begin{figure}[h]
    \centering
    % \vspace{-0.1in}
    \includegraphics[width=0.82\linewidth]{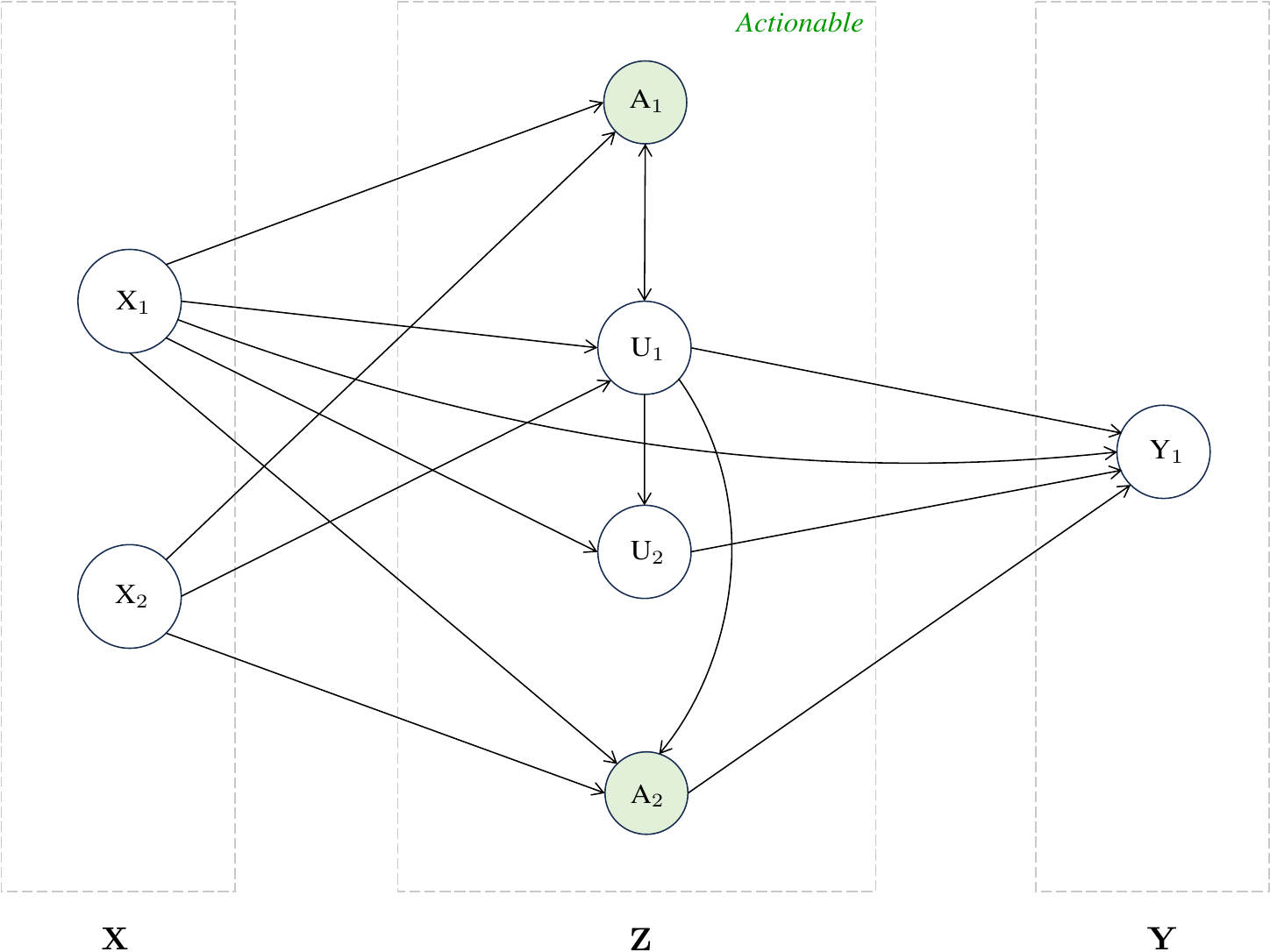}
    \caption{The graph structure of nonlinear synthetic data 2.}
    \label{fig:nonlin-syn2}
\end{figure}

% \begin{itemize}
%     \item original version: \(\bbP(\Y\mid \X=\x, Rh(\Z=\z)), \bbP(\Y\mid \x, Rh(\z))\) original version
%     \item original version: \[\bbP(\Y\mid \X=\x, Rh(\Z=\z)), \bbP(\Y\mid \x, Rh(\z))
%     \]
%     \item candidate 1: \(\bbP(\Y\mid \X=\x, \Z\aleq\z), \bbP(\Y\mid \x, \mathring{\z})\) candidate 1
%     \item candidate 1: \[\bbP(\Y\mid \X=\x, \Z\aleq\z), \bbP(\Y\mid \x, \mathring{\z})\]
%     \item candidate 2: \(\bbP(\Y\mid \X=\x, \Z\rheq\z), \bbP(\Y\mid \x, \mathring{\z})\) candidate 2
%     \item candidate 2: \[\bbP(\Y\mid \X=\x, \Z\rheq\z), \bbP(\Y\mid \x, \mathring{\z})\]
%     \item candidate 3: \(\bbP(\Y\mid \X=\x, \Z\rheq\z), \bbP(\Y\mid \x, \mathop{\z}\limits^{r})\) candidate 3
%     \item candidate 3: \[\bbP(\Y\mid \X=\x, \Z\rheq\z), \bbP(\Y\mid \x, \mathop{\z}\limits^{r})\]
% \end{itemize}

%% file: appendix/scalability_sensitivity.tex
\subsection{Scalability and sensitivity experiments}
\label{app:scalability_sensitivity}

We report additional simple experiments on Non-Syn1 to evaluate computational scalability and hyperparameter sensitivity.
All AUF probabilities in this section are averaged over $5$ random seeds.

\paragraph{Kernel approximation for scalability.}
The exact nested KRR estimator requires solving Gram-matrix systems in the offline fitting stage.
This cost is paid before deployment, while online decision-making only requires evaluating the fitted surrogate and running projected gradient ascent.
To test whether standard kernel approximations can reduce the offline cost, we compare exact KRR with Nystr\"om~\citep{williams2001nystrom} and random Fourier feature (RFF)~\citep{rahimi2007random} approximations in Tab.~\ref{tab:scalability_approx}.

\begin{table}[h]
    \centering
    \caption{Scalability results on Non-Syn1. Time is the offline fitting time in seconds.}
    \label{tab:scalability_approx}
    \begin{tabular}{lcccc}
        \toprule
        Method & $N$ & Rank/Dim. & Time (s) & AUF Prob. ($\uparrow$) \\
        \midrule
        Exact & 1k & -- & 0.131 & 0.424 \\
        Nystr\"om & 1k & 100 & 0.022 & 0.424 \\
        Nystr\"om & 1k & 200 & 0.035 & 0.424 \\
        RFF & 1k & 500 & 0.087 & 0.360 \\
        Exact & 5k & -- & 2.118 & 0.453 \\
        Nystr\"om & 5k & 100 & 0.292 & 0.453 \\
        \bottomrule
    \end{tabular}
\end{table}

Nystr\"om preserves the AUF probability of exact KRR in these runs while reducing offline time by roughly $4\times$--$7\times$.
RFF also reduces runtime but shows a clearer speed-performance trade-off in this setting.
These results indicate that the proposed estimator can directly benefit from standard low-rank kernel approximations when larger observational datasets are available.

\paragraph{Sensitivity to the smoothing parameter.}
The smoothing parameter $\eta$ controls the approximation of the desired-region indicator by $w_\eta$.
Table~\ref{tab:eta_sensitivity} shows that moderate smoothing works best on Non-Syn1.
Very small $\eta$ over-smooths the indicator, while very large $\eta$ makes the surrogate sharper and harder to estimate from finite samples, matching the approximation-estimation trade-off discussed in Sec.~\ref{sec:theory}.

\begin{table}[h]
    \centering
    \caption{Sensitivity to the smoothing parameter $\eta$ on Non-Syn1.}
    \label{tab:eta_sensitivity}
    \begin{tabular}{lccccc}
        \toprule
        $\eta$ & 3 & 5 & 10 & 20 & 50 \\
        \midrule
        Mean AUF Prob. ($\uparrow$) & 0.366 & 0.370 & 0.456 & 0.436 & 0.366 \\
        \bottomrule
    \end{tabular}
\end{table}

\paragraph{Sensitivity to regularization.}
We further fix $\eta=10$ and sweep the inner and outer KRR regularization parameters $\lambda_h$ and $\lambda_x$ over multiplicative factors of their default values.
As shown in Tab.~\ref{tab:regularization_sensitivity}, the method exhibits a broad plateau around the default scale: many combinations between $0.2\times$ and $2\times$ achieve nearly identical AUF probabilities.
Performance mainly degrades when both regularization levels are made substantially larger.

\begin{table}[h]
    \centering
    \caption{Sensitivity to regularization on Non-Syn1. Entries are mean AUF probabilities with $\eta=10$.}
    \label{tab:regularization_sensitivity}
    \begin{tabular}{lccccc}
        \toprule
        $\lambda_h/\lambda_x$ & $0.2\times$ & $0.5\times$ & $1\times$ & $2\times$ & $5\times$ \\
        \midrule
        $0.2\times$ & 0.456 & 0.456 & 0.456 & 0.456 & 0.448 \\
        $0.5\times$ & 0.456 & 0.456 & 0.456 & 0.448 & 0.448 \\
        $1\times$   & 0.456 & 0.456 & 0.456 & 0.448 & 0.376 \\
        $2\times$   & 0.456 & 0.448 & 0.448 & 0.376 & 0.352 \\
        $5\times$   & 0.448 & 0.448 & 0.382 & 0.352 & 0.358 \\
        \bottomrule
    \end{tabular}
\end{table}

%% file: appendix/nhanes_benchmark.tex
\subsection{NHANES benchmark}
\label{app:nhanes_benchmark}

We build one real-data-derived semi-synthetic AUF benchmark from the U.S. National Health and Nutrition Examination Survey (NHANES) 2011--2018 cycles~\citep{nhanes_cdc}. 
The benchmark is designed as a contextual diabetes-control task: after observing immutable demographic and socioeconomic context variables, a method chooses feasible values for actionable lifestyle/metabolic variables, and the target outcomes are two glycemic markers, glycated hemoglobin (HbA1c) and fasting plasma glucose (FPG). 
Because NHANES is cross-sectional, post-alteration outcomes are generated by a fitted structural generator rather than by randomized clinical interventions; the benchmark should therefore be interpreted as a semi-synthetic evaluation environment.

\paragraph{Data source and cohort.}
The raw NHANES modules are merged by participant identifier (\texttt{SEQN}) across the 2011--2012, 2013--2014, 2015--2016, and 2017--2018 cycles. 
We harmonize cross-cycle aliases such as \texttt{BPXOSY1}/\texttt{BPXSY1} and \texttt{BPXODI1}/\texttt{BPXDI1}, remove NHANES sentinel missing codes, keep adults with both outcomes observed, and apply complete-case selection on the benchmark variables. 
The resulting benchmark has $7{,}465$ complete-case participants.

\begin{table}[h]
    \centering
    \scriptsize
    \caption{NHANES benchmark summary. The validation column reports the mean absolute error between real and generated correlation matrices.}
    \label{tab:nhanes_summary}
    \begin{tabular}{lcllcc}
        \toprule
        Benchmark & Complete cases & Actionable variables $\Z_a$ & Non-actionable intermediates $\Z_u$ & Outcomes \(\Y\) & Corr. MAE \\
        \midrule
        NHANES & 7,465 & \makecell[l]{BMI, CalIntake, CarbIntake,\\ FiberIntake, Sedentary\_min} & \makecell[l]{SBP, DBP,\\ TotalCholesterol, HDL} & HbA1c, FPG & 0.0483 \\
        \bottomrule
    \end{tabular}
\end{table}

\paragraph{Actionability and influence relations.}
We partition variables according to the temporal and physiological ordering available in the cross-sectional measurements: immutable context variables $\X$ precede intermediate variables $\Z$, and glycemic outcomes $\Y$ are downstream targets. 
The immutable context variables are Age, Sex, Race, FamilyHx\_Diabetes, Education, and Income\_Ratio. 
Expert actionability assumptions mark BMI, daily calorie intake, carbohydrate intake, fiber intake, and sedentary minutes as actionable. 
Blood pressure and lipid measurements are treated as non-actionable because the benchmark models them as physiological consequences rather than direct decision levers.
The benchmark DAG encodes the following influence relations. Context variables influence BMI, diet, sedentary behavior, blood pressure, lipids, and both outcomes. BMI influences diet, blood pressure, lipids, HbA1c, and FPG. Calorie intake influences carbohydrate/fiber intake, lipids, HbA1c, and FPG. Fiber intake and sedentary time influence both glycemic outcomes, and the physiological intermediates SBP, total cholesterol, and HDL influence both outcomes.

\begin{figure}[h]
    \centering
    \includegraphics[width=0.8\linewidth]{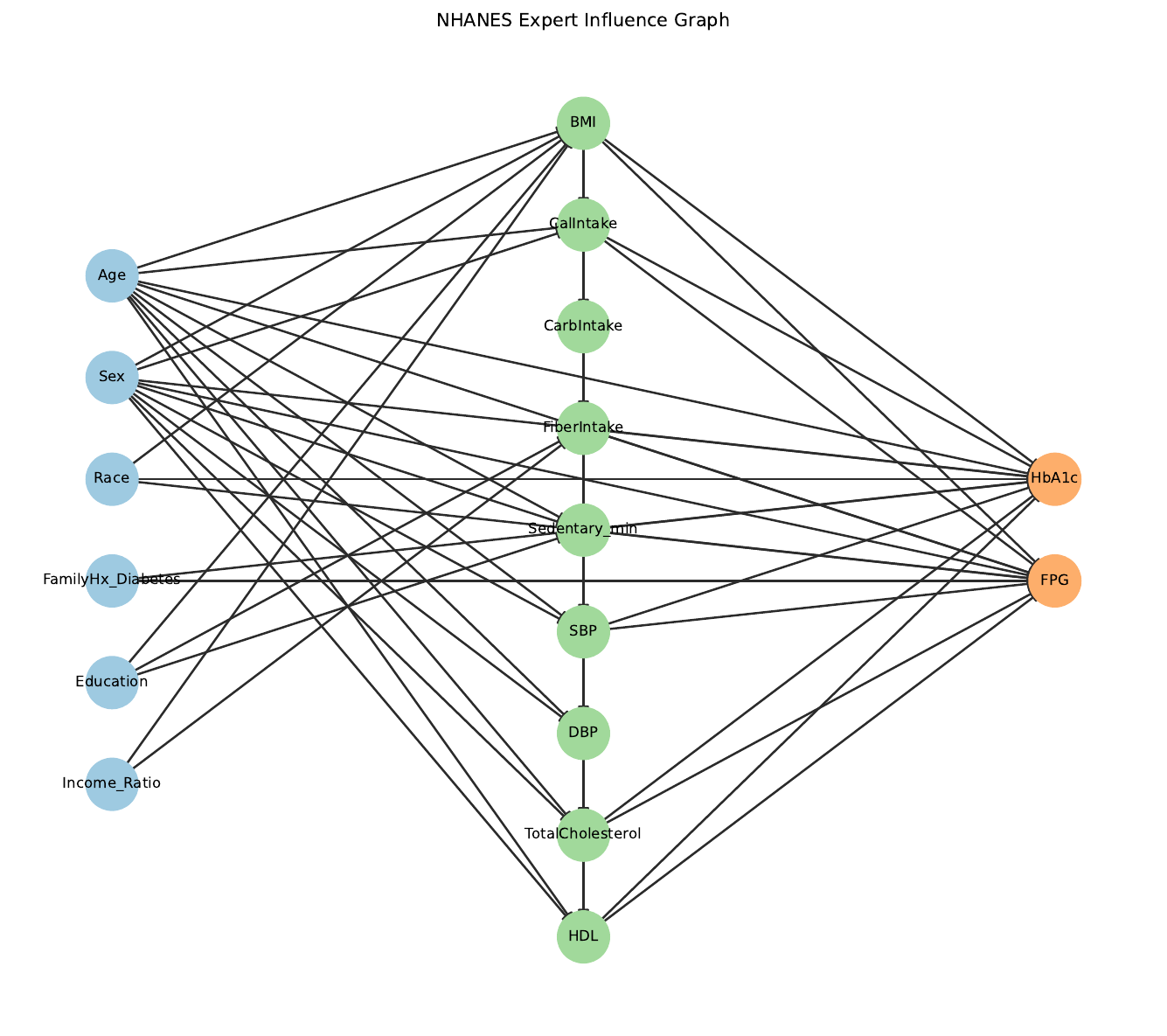}
    \vspace{-0.1in}
    \caption{NHANES influence graph specified by human experts for the benchmark. Blue nodes denote context variables; green nodes denote intermediate variables, where BMI, CalIntake, CarbIntake, FiberIntake, and Sedentary\_min are actionable and SBP, DBP, TotalCholesterol, and HDL are non-actionable; orange nodes denote glycemia outcomes.}
    \label{fig:nhanes_structure}
\end{figure}
\vspace{-0.1in}

\paragraph{Typed structural generator.}
For each node, the generator fits a conditional model suited to the variable type. 
The parent sets for these node-wise models follow the expert-specified influence graph in Fig.~\ref{fig:nhanes_structure}.
Discrete variables use empirical categorical sampling for root nodes and gradient-boosted classifiers for non-root nodes. 
Continuous variables use gradient-boosted regressors with residual bootstrap; nonnegative variables are modeled on a log-transformed scale and clipped back to valid support. 
During evaluation, an alteration sets only a subset of $\Z_a$ and is clipped to the empirical $[1\%,99\%]$ quantile interval of the corresponding variable.

\paragraph{Success region.}
We use one clinically interpretable two-dimensional glycemic target:
\[
    \sS_{\text{NHANES}}=\{(\text{HbA1c}, \text{FPG}) \mid \text{HbA1c}<5.7,\ \text{FPG}<100~\text{mg/dL}\}.
\]
These thresholds correspond to common normoglycemia screening boundaries~\citep{ada2024diagnosis}. 
Success is estimated as the Monte Carlo frequency with which simulated post-alteration outcomes fall inside this region.

% \paragraph{NHANES result table.}
% Table~\ref{tab:nhanes_method_results} follows the AUF-probability reporting protocol used in Tab.~\ref{tab:main_results}. 
% Each method is fitted from $1{,}000$ observational samples and evaluated over $5$ random seeds with $100$ Monte Carlo samples per seed.
% For our method, the NHANES adapter uses tuned kernel regularization and bandwidth multipliers recorded in the experiment artifact while keeping the final evaluation contexts and Monte Carlo seeds shared across all methods.
% Because several comparison methods assume linear-Gaussian internal models, their NHANES entries use lightweight adapters fitted from the same observational samples; final table values are always computed by the fitted NHANES generator rather than by a method's internal estimated objective.

\paragraph{Validation against observed statistics.}
The fitted generator is validated by comparing generated samples against the processed complete-case data. 
The current artifacts report correlation-matrix MAE of $.0483$. 
The marginal-distribution overlays and correlation comparisons in Fig.~\ref{fig:nhanes_validation} show that the generated observational samples broadly preserve the empirical distributional shape and pairwise dependence structure, while retaining expected limitations of a fitted observational generator in tails.

\begin{figure}[h]
    \centering
    \includegraphics[width=\linewidth]{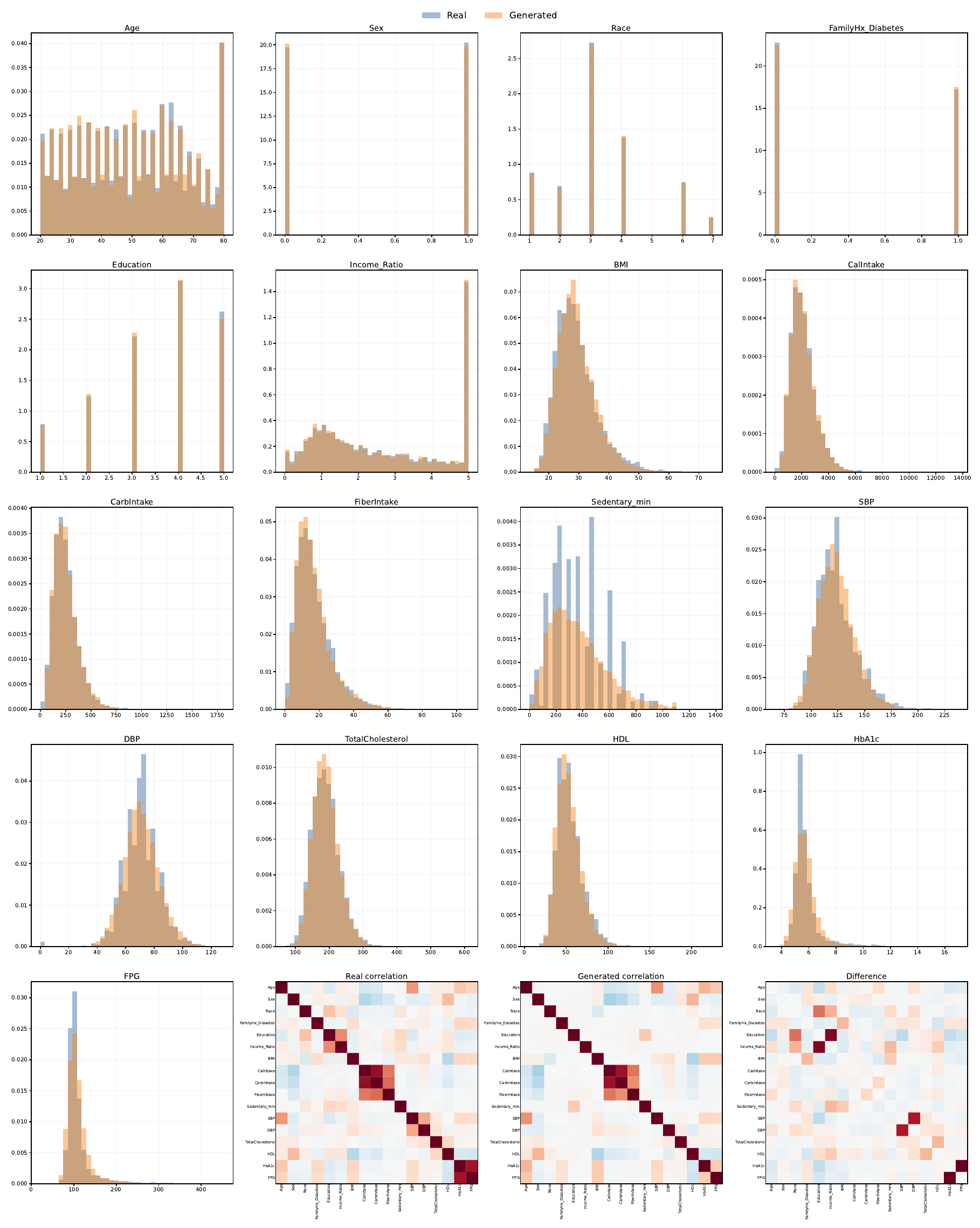}
    \caption{NHANES generator validation plots comparing real complete-case samples against generated observational samples. In the histogram panels, blue histograms denote real samples and orange histograms denote generated samples. The three heatmaps show the real correlation matrix, the generated correlation matrix, and their difference, respectively.}
    \label{fig:nhanes_validation}
\end{figure}

\paragraph{Limitations.}
The benchmark does not model NHANES survey weights, strata, or primary sampling units. 
Complete-case filtering may induce selection bias. 
Most importantly, the generator is fitted from observational cross-sectional data, so its post-action distributions are benchmark ground truth under the specified SEM rather than clinical influence truth.